\renewcommand{\todo}[2][]{\@todo[#1]{#2}}
\theoremstyle{plain}
\newtheorem{theorem}{Theorem}[section]
\newtheorem{proposition}[theorem]{Proposition}
\newtheorem{lemma}[theorem]{Lemma}
\newtheorem{corollary}[theorem]{Corollary}
\theoremstyle{definition}
\newtheorem{definition}[theorem]{Definition}
\newtheorem{assumption}[theorem]{Assumption}
\newtheorem{remark}[theorem]{Remark}
\theoremstyle{remark}
\newcommand{\E}{\mathbb E}
\newcommand{\EE}[1]{\mathbb E[#1]}
\newcommand{\Supp}{\operatorname{supp}}
\newcommand{\ip}[1]{\langle #1 \rangle}
\newcommand{\bip}[1]{\left\langle #1 \right\rangle}
\newcommand{\norm}[1]{\|#1\|}
\newcommand{\R}{\mathbb{R}}
\newcommand{\cA}{\mathcal{A}}
\newcommand{\cC}{\mathcal{C}}
\newcommand{\cE}{\mathcal{E}}
\newcommand{\cH}{\mathcal{H}}
\newcommand{\cR}{\mathcal{R}}
\newcommand{\sA}{\mathscr A}
\newcommand{\sF}{\mathscr F}
\newcommand{\epsapp}{\epsilon}
\newcommand{\epssub}{\delta}
\newcommand{\est}{\operatorname{est}}
\DeclareMathOperator{\Range}{range}
\newcommand{\rows}{\operatorname{rows}}
\renewcommand{\epsilon}{\varepsilon}
\newcommand{\ceil}[1]{\left\lceil {#1} \right\rceil}
\newcommand{\floor}[1]{\left\lfloor {#1} \right\rfloor}
\newcommand{\zeros}{\mathbf{0}}
\DeclareMathOperator*{\argmin}{arg\ min}
\DeclareMathOperator*{\argmax}{arg\ max}
\setlist{nosep}
\newif\ifsup\suptrue
\begin{document}

\twocolumn[
\icmltitle{Learning with Good Feature Representations in Bandits and in RL with a Generative Model}

\icmlsetsymbol{equal}{*}

\begin{icmlauthorlist}
\icmlauthor{Tor Lattimore}{dm}
\icmlauthor{Csaba Szepesv\'ari}{dme,ua}
\icmlauthor{Gell\'ert Weisz}{dm}
\end{icmlauthorlist}
\icmlaffiliation{dm}{DeepMind, London}
\icmlaffiliation{dme}{DeepMind, Edmonton}
\icmlaffiliation{ua}{University of Alberta}
\icmlcorrespondingauthor{Tor Lattimore}{lattimore@google.com}

\icmlkeywords{linear features; bandits; reinforcement learning; misspecified models}

\vskip 0.3in
]

\begin{abstract}
The construction by \citet{du2019good} implies that even if a learner is given linear features in $\R^d$ that approximate the rewards in a bandit with a uniform error of $\epsilon$,
then searching for an action that is optimal up to $O(\epsilon)$ requires examining essentially all actions. We use the Kiefer--Wolfowitz 
theorem to prove a positive result that by checking only a few actions, a learner can always find an action that is suboptimal with an error of at most $O(\epsilon \sqrt{d})$. Thus, features are useful when the approximation error is small 
relative to the dimensionality of the features. The idea is applied to stochastic bandits and reinforcement learning with a generative model where the learner 
has access to $d$-dimensional linear features that approximate the action-value functions for all policies to an accuracy of $\epsilon$. For linear bandits, we 
prove a bound on the regret of order $\sqrt{dn \log(k)} + \epsilon n \sqrt{d} \log(n)$ with $k$ the number of actions and $n$ the horizon. For RL we show that approximate policy 
iteration can learn a policy that is optimal up to an additive error of order $\epsilon \sqrt{d}/(1 - \gamma)^2$ and using $d/(\epsilon^2(1 - \gamma)^4)$ samples from a generative model. 
These bounds are independent of the finer details of the features. We also investigate how the structure of the feature set impacts the tradeoff between sample complexity and estimation error.
\end{abstract}

\printAffiliationsAndNotice{}  

\section{Introduction}
\citet{du2019good} ask whether ``good feature representation'' is sufficient for efficient reinforcement learning and suggest a negative answer.
Efficiency here means learning a good policy with a small number of interactions either with the environment (on-line learning), 
or with a simulator (planning). A linear feature representation is called ``good'' if it approximates the value functions of \emph{all} policies with a small uniform error.
The same question can also be asked for learning in bandits.
The ideas by \citet{du2019good} suggest that the answer is also negative in finite-armed bandits with a misspecified linear model.
All is not lost, however. By relaxing the objective, we will show that
one can obtain positive results showing that efficient learning \textit{is} possible in interactive settings with good feature representations.

The rest of this article is organised as follows.
First we introduce the problem of learning to identify a near-optimal action with side information about the possible reward (\cref{sec:prob}).
We then adapt the argument of \citet{du2019good} to show that
no algorithm can find an $O(\epsapp)$-optimal action without examining nearly all the actions, even when the rewards lie within an $\epsapp$-vicinity of 
a subspace spanned by some features available to the algorithm (\cref{sec:negative}).
The negative result is complemented by a positive result showing
that there exists an algorithm such that for any feature map of dimension $d$,
the algorithm is able to find an action with suboptimality gap of at most $O(\epsapp \sqrt{d})$ where $\epsapp$ is the maximum distance between
the reward and the subspace spanned by the features in the max-norm. The algorithm only needs to investigate the reward at 
$O(d \log\log d)$ well-chosen actions. 
The main idea is to use the Kiefer-Wolfowitz theorem with a least-squares estimator of the reward function.
Finally, we apply the idea to stochastic bandits (\cref{sec:bandits}) and reinforcement learning with a generative model (\cref{sec:rl}).

\paragraph{Related work}
Despite its importance, the problem of identifying near-optimal actions when rewards follow a misspecified linear model has only recently received attention.
Of course, there is the recent paper by \citet{du2019good}, whose negative result inspired this work and is summarised for our setting in Section~\ref{sec:negative}.
A contemporaneous work also addressing the issues raised by \citet{du2019good} is by \citet{DV19}, who make a connection to the Eluder dimension \citep{RR13} and prove
a variation on our Proposition~\ref{prop:upper}. 
The setting studied here in Section~\ref{sec:negative} is closely related to the query complexity of exactly maximising a function in a given function class, which was studied by \citet{AKS11}.  
They introduced the haystack dimension as a hardness measure for exact maximisation. Unfortunately, their results for infinite classes
are generally not tight and no results for misspecified linear models were provided.
Another related area is pure exploration in bandits, which was popularised in the machine learning community by \citet{EMM06,AB10}.
The standard problem is to identify a (near)-optimal action in an unstructured bandit. \citet{SLM14} study pure exploration in linear bandits, but
do not address the case where the model is misspecified. More general structured settings have also been considered by \citet{DKM19} and others.
The algorithms in these works begin by playing every action once, which is inconsequential in an asymptotic sense.
Our focus, however, is on the finite-time regime where the number of actions is very large, which makes these algorithms unusable.
We discuss the related work on linear bandits and RL in the relevant sections later.

\paragraph{Notation}
Given a matrix $A \in \R^{n \times m}$, the set of rows is denoted by $\operatorname{rows}(A)$ and its range is  
$\Range(A) = \{A \theta : \theta \in \R^m\}$. When $A$ is positive semi-definite, we define $\norm{x}_A^2 = x^\top A x$.
The Minkowski sum of sets $U, V \subset \R^d$ is $U + V = \{u + v : u \in U, v \in V\}$.
The standard basis vectors are $e_1,\ldots,e_d$. There will never be ambiguity about deducing the dimension.

\section{Problem setup}\label{sec:prob}
We start with an abstract problem that is reminiscent of pure exploration in bandits, but without noise. Fix $\epssub>0$ and
consider the problem of identifying a $\epssub$-optimal action out of $k$ actions with the additional information that the unknown reward vector $\mu\in \R^k$ 
belongs to a known hypothesis set $\cH\subset \R^k$. An action $j\in [k] = \{1,\ldots,k\}$ is $\epssub$-optimal for $\mu = (\mu_i)_{i=1}^k$ if $\mu_j > \max_i \mu_i - \epssub$.
The learner sequentially queries actions $i \in [k]$ and observes the reward $\mu_i$. At some point the learner should stop and output both an estimated optimal action $\hat a \in [k]$ and an estimation
vector $\hat \mu \in \R^k$. There is no noise, so the learner has no reason to query the same action twice. Of course, if the learner queries all the actions, then it knows both $\mu$ and the optimal action.
The learner is permitted to randomise.
Two objectives are considered. The first only measures the quality of the outputted action $\hat a$, while the second depends on $\hat \mu$.

\begin{definition}
A learner is called sound for $(\cH, \epssub)$ if $\norm{\hat \mu - \mu}_\infty < \epssub$ almost surely for all $\mu \in \cH$.
It is called max-sound for $(\cH,\epssub)$ if $\mu_{\hat a} > \max_a \mu_a - \epssub$ almost surely for all $\mu \in \cH$.
\end{definition}

Denote by $q_\epssub(\sA,\mu)$ the expected number of queries learner $\sA$ executes when interacting with the environment specified by $\mu$ and let
\begin{align*}
    c^{\max}_\epssub(\cH) &= \inf_{\sA : \sA \text{is } (\cH, \epssub)\text{-max-sound}} \sup_{\mu \in \cH} q_\epssub(\sA,\mu) \\
    c^{\est}_\epssub(\cH) &= \inf_{\sA : \sA \text{is } (\cH, \epssub)\text{-sound}} \sup_{\mu \in \cH} q_\epssub(\sA, \mu) \,,
\end{align*}
which are the minimax query complexities for max-sound/sound learners respectively when interacting with reward vectors in $\cH$ and with error tolerance $\epssub$.
Both complexity measures are increasing as the hypothesis class is larger in the sense that
if $U\subset V$, then $c^{\max}_\epssub(U)\le c^{\max}_\epssub(V)$, and similarly for $c^{\est}_\delta$.
If a learner is sound for $(\cH, \epssub)$ and $\hat a = \argmax \hat \mu$, then clearly it is also max-sound for $(\cH, 2\epssub)$, which
shows that 
\begin{align}
c^{\max}_{2\delta}(\cH) \leq c^{\est}_{\delta}(\cH)\,. 
\label{eq:rel}
\end{align}
Our primary interest is to understand $c^{\max}_\epssub(\cH)$. Upper bounds, however, will be proven using \cref{eq:rel} and by controlling $c^{\est}_\epssub(\cH)$. Furthermore,
in Section~\ref{sec:pos} we provide a simple characterisation of $c^{\est}_\epssub(\cH)$, while $c^{\max}_\epssub(\cH)$ is apparently more subtle.
Later we need the following intuitive result, which 
says that the complexity of finding a near-optimal action when the hypothesis set consists of the unit vectors is linear in the number of actions. The 
proof is given in 
\ifsup
Section~\ref{sec:ellb}.
\else
the supplementary material.
\fi

\begin{lemma}\label{lem:obv}
$c^{\max}_1( \{ e_1, \dots, e_k \} ) = (k+1)/2$.
\end{lemma}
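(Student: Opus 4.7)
The plan is to establish matching upper and lower bounds on $c^{\max}_1(\{e_1,\dots,e_k\})$. First I observe that for this hypothesis class and $\epssub = 1$, the max-sound condition $\mu_{\hat a} > \max_a \mu_a - 1 = 0$ forces $\hat a$ to equal the unique $i^\star$ with $\mu = e_{i^\star}$. Thus the problem reduces to exactly identifying the hidden index $i^\star$.

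For the upper bound I will analyse the natural randomised algorithm that samples a uniformly random permutation $\pi$ of $[k]$, queries $\pi(1),\pi(2),\ldots$ in order, and halts upon the first observation of reward $1$, outputting that action as $\hat a$. This learner is max-sound because the first reward equal to $1$ reveals $i^\star$ unambiguously. Under any $\mu = e_{i^\star}$, the stopping time equals $\pi^{-1}(i^\star)$, which is uniform on $\{1,\ldots,k\}$ and therefore has expectation $(k+1)/2$. This gives $q_1(\sA,\mu) = (k+1)/2$ for every $\mu \in \cH$ and hence $c^{\max}_1(\cH) \le (k+1)/2$.

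For the lower bound I will invoke Yao's minimax principle: any randomised learner is a mixture of deterministic ones, so it suffices to exhibit a prior on $\cH$ against which every deterministic max-sound learner has expected cost at least $(k+1)/2$. I will take $D$ to be uniform on $\cH$. A deterministic learner's behaviour on the all-zero observation path is a fixed sequence of distinct queries $j_1,j_2,\ldots$, and under $\mu = e_{j_t}$ the learner observes $0,\ldots,0,1$ and halts after exactly $t$ queries, outputting $\hat a = j_t$. Averaging $t$ against the uniform prior yields $\tfrac{1}{k}(1+2+\cdots+k) = (k+1)/2$.

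The main obstacle is handling the corner case in which a deterministic learner terminates after $k-1$ all-zero observations and outputs the unique remaining action without ever querying it. On that input the query count drops from $k$ to $k-1$, and a naive calculation gives only $\tfrac{1}{k}\!\bigl(1+\cdots+(k-1)+(k-1)\bigr) = (k+1)/2 - 1/k$, just shy of the target. Closing the gap requires either (i) a precise accounting that charges the learner for outputting $\hat a$ (so the final identification of $\hat a$ is treated as a query), or (ii) a refinement of the adversary that forces the worst input to the position in which the shortcut is unavailable. This bookkeeping step is where the real work of the lower bound lies; the remainder of the argument is the clean Yao reduction and symmetry described above.
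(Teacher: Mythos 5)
Your upper bound and your Yao reduction are exactly the paper's own argument, and your observation that max-soundness at $\epssub=1$ forces exact identification of $i^\star$ is correct. The substantive issue is the corner case you flag at the end, and you should know it is not a bookkeeping nuisance that you simply failed to dispatch: it is a genuine counterexample to the constant $(k+1)/2$, and the paper's proof silently steps over it. The paper asserts that a max-sound learner ``cannot stop before time $t(i)$ when running on $e_i$''; this fails precisely when $i$ is the last element of the learner's query order, since after $k-1$ zero observations the instance is uniquely determined and the learner may output the remaining index without querying it. Consequently, the randomised learner that queries a uniformly random permutation and halts either at the first reward equal to $1$ or after $k-1$ zeros is max-sound (indeed sound) and has expected query count $\tfrac{1}{k}\bigl(\sum_{t=1}^{k-1} t + (k-1)\bigr) = \tfrac{k+1}{2} - \tfrac{1}{k}$ on every instance, so $c^{\max}_1(\{e_1,\dots,e_k\}) \le (k+1)/2 - 1/k$. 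Neither of your proposed repairs can succeed: charging the learner for announcing $\hat a$ changes the cost model of the definition, and no prior beats the uniform one, since for any prior the learner can order its queries by decreasing probability and take the length-$(k-1)$ shortcut, giving distributional complexity at most $\tfrac{1}{k}\sum_{t=1}^{k}\min(t,k-1) = (k+1)/2 - 1/k$ by Chebyshev's sum inequality. The correct value is therefore $(k+1)/2 - 1/k$, with the matching lower bound being exactly your Yao argument after replacing the cost $k$ on the last instance by $k-1$. The discrepancy is immaterial downstream, where only $c^{\max}_1 = \Omega(k)$ is used (e.g.\ in \cref{prop:badmx} and \cref{cor:badmx}), but as stated the lemma is off by $1/k$ and your proposal, as written, cannot be completed to prove it.
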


It follows from the aforementioned monotonicity that if $\{e_1,\ldots,e_k\} \subseteq \cH$, then $c_1^{\max}(\cH) \geq (k+1)/2$.

\section{Negative result}\label{sec:negative}
Let $\Phi \in \R^{k\times d}$.
The rows of $\Phi$ should be thought of as feature vectors assigned to each of the $k$ actions; accordingly we call $\Phi$ a feature matrix.
Furthermore, when $\mu \in \R^k$ and $a \in \rows(\Phi)$, we abuse notation by writing $\mu_a$ for the value of vector $\mu$ at the index of row $a$ in $\Phi$.
Our interest lies in the regime where $k$ is much larger than $d$ and where $\exp(d)$ is large.

We consider hypothesis classes where the true reward lies within an $\epsapp$-vicinity of $\Range(\Phi)$ as measured in max-norm. 
Let $\cH_\Phi^\epsapp = \Range(\Phi) + B_\infty(\epsapp)$, where $B_\infty(\epsapp) = [-\epsapp, \epsapp]^k$ is a $k$-dimensional hypercube.
How large is $c_\epssub^{\max}(\cH_\Phi^\epsapp)$ for different regimes of $\epssub$ and $\epsapp$ and feature matrices?
As we shall see, for $\epssub=\Omega(\epsapp \sqrt{d})$ one can keep the complexity small, while for smaller $\epssub$ there exist feature matrices for which 
the complexity can be as high as the large dimension, $k$.

The latter result follows from the core argument of the recent paper by \citet{du2019good}.
The next lemma is the key tool, and is a consequence of the Johnson--Lindenstrauss lemma. 
It shows that there exist matrices $\Phi \in \R^{k \times d}$ with $k$ much larger than $d$ where
rows have unit length and all non-equal rows are almost orthogonal.

\begin{lemma}
\label{lem:jl}
For any $\epsapp>0$ and $d\in [k]$ such that 
$d\ge \lceil 8 \log(k)/\epsapp^2 \rceil$,
there exists a feature matrix $\Phi\in \R^{k\times d}$ with unique rows such that 
for all $a, b \in \rows(\Phi)$ with $a \neq b$, $\norm{a}_2 = 1$ and $|a^\top b| \leq \epsapp$.
\end{lemma}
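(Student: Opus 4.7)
The plan is to use the probabilistic method: I will exhibit a random construction that satisfies all the required properties with positive probability, which implies the existence of a deterministic $\Phi$ with those properties.

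First, I would sample the $k$ rows $a_1,\ldots,a_k$ of $\Phi$ as independent and identically distributed uniform random vectors on the unit sphere $S^{d-1}\subset \R^d$. By construction, each row has unit Euclidean length, and the rows are almost surely pairwise distinct, since the uniform distribution on $S^{d-1}$ is atomless. All that remains is to control the pairwise inner products and show that with positive probability every pair $(a_i,a_j)$ with $i\neq j$ satisfies $|a_i^\top a_j|\le \epsilon$.

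Second, I would use a concentration inequality for the inner product of two independent uniform vectors on the sphere. By rotational invariance, conditional on $a_j$ the random variable $a_i^\top a_j$ has the same distribution as a single coordinate of a uniform random vector on $S^{d-1}$, and a standard spherical-cap estimate yields
\[
\Pr\bigl(|a_i^\top a_j|>\epsilon\bigr)\le 2\exp(-d\epsilon^2/2).
\]
A union bound across the $\binom{k}{2}$ unordered pairs then gives
\[
\Pr\bigl(\exists\, i\neq j:\,|a_i^\top a_j|>\epsilon\bigr)\le 2\binom{k}{2}\exp(-d\epsilon^2/2)\le k^2\exp(-d\epsilon^2/2),
\]
which is strictly less than $1$ whenever $d\epsilon^2/2>2\log k$, i.e.\ $d>4\log(k)/\epsilon^2$. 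The hypothesis $d\ge \lceil 8\log(k)/\epsilon^2\rceil$ comfortably implies this, so a realization of $\Phi$ with all required properties exists.

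The only delicate point is pinning down the constant in the sphere concentration bound to match the stated $8$; if one prefers a more explicit route, essentially the same argument can be rephrased as a direct application of the Johnson--Lindenstrauss lemma to the standard basis $\{e_1,\ldots,e_k\}\subset\R^k$, producing nearly unit-norm, nearly-orthogonal images in $\R^d$ that are then rescaled to have exact unit norm, at the cost of absorbing a harmless factor into the inner product bound. Either route uses nothing beyond a standard Gaussian/spherical concentration estimate plus a union bound, and the main work is simply bookkeeping on the constants.
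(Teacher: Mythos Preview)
Your approach is correct and is essentially what the paper has in mind: the paper does not give a detailed proof but simply states that the lemma ``is a consequence of the Johnson--Lindenstrauss lemma,'' and your random-sphere construction plus union bound is precisely one of the standard proofs of that result. Your bookkeeping on the constants is fine, since the hypothesis $d\ge \lceil 8\log(k)/\epsapp^2\rceil$ gives more slack than the $4\log(k)/\epsapp^2$ threshold your argument actually requires.
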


\cref{lem:obv,lem:jl} together imply the promised result:

\begin{proposition}\label{prop:badmx}
For any $\epsapp>0$ and $d\in [k]$ with 
$d\ge \lceil 8 \log(k)/\epsapp^2 \rceil$,
there exists a feature matrix $\Phi\in \R^{k\times d}$ such that $c_1^{\max}(\cH_\Phi^\epsapp) \geq (k+1)/2$. 
\end{proposition}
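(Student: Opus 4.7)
The plan is to combine Lemma~\ref{lem:jl} with Lemma~\ref{lem:obv} via the monotonicity of $c_\epssub^{\max}$ in its hypothesis class argument. Specifically, Lemma~\ref{lem:jl} supplies a feature matrix $\Phi$ whose rows $a_1,\dots,a_k \in \R^d$ are unit vectors with $|a_i^\top a_j| \le \epsapp$ for $i \ne j$. I will show that under this choice of $\Phi$, all standard basis vectors $e_1,\dots,e_k$ lie in $\cH_\Phi^\epsapp$, so $\{e_1,\dots,e_k\} \subseteq \cH_\Phi^\epsapp$, and then apply the monotonicity property together with Lemma~\ref{lem:obv} (and the remark immediately following it).

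First I would establish the inclusion. For each $i \in [k]$, consider $\theta_i = a_i \in \R^d$ and look at $\Phi \theta_i \in \R^k$. Its $j$-th coordinate is $a_j^\top a_i$, which equals $1$ when $j = i$ and has absolute value at most $\epsapp$ when $j \ne i$ by the near-orthogonality. Hence
\begin{equation*}
\norm{\Phi \theta_i - e_i}_\infty \;=\; \max_{j \ne i} |a_j^\top a_i| \;\le\; \epsapp,
\end{equation*}
so $e_i = \Phi \theta_i + (e_i - \Phi \theta_i) \in \Range(\Phi) + B_\infty(\epsapp) = \cH_\Phi^\epsapp$.

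Second, I would invoke monotonicity: since $\{e_1,\dots,e_k\} \subseteq \cH_\Phi^\epsapp$, we have $c_1^{\max}(\{e_1,\dots,e_k\}) \le c_1^{\max}(\cH_\Phi^\epsapp)$. Combined with Lemma~\ref{lem:obv}, which gives $c_1^{\max}(\{e_1,\dots,e_k\}) = (k+1)/2$, this yields the desired bound $c_1^{\max}(\cH_\Phi^\epsapp) \ge (k+1)/2$.

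There is no real obstacle here — the content is fully in Lemma~\ref{lem:jl} (JL-type construction) and Lemma~\ref{lem:obv} (the linear-in-$k$ lower bound for the unit-vector hypothesis). The only substantive step is noticing that the Gram matrix of the rows of the JL-matrix has ones on the diagonal and entries of magnitude at most $\epsapp$ off-diagonal, so each $e_i$ is $\epsapp$-close in $\infty$-norm to $\Phi a_i \in \Range(\Phi)$; after that the conclusion is immediate from the two preceding lemmas.
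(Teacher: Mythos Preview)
Your proof is correct and follows essentially the same approach as the paper: choose the matrix $\Phi$ from Lemma~\ref{lem:jl}, set $\theta = a_i$ so that $\norm{\Phi a_i - e_i}_\infty \le \epsapp$, conclude $\{e_1,\dots,e_k\} \subseteq \cH_\Phi^\epsapp$, and invoke Lemma~\ref{lem:obv} together with the monotonicity of $c_1^{\max}$.
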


\begin{proof}
Let $\Phi$ be the matrix from \cref{lem:jl} with $\rows(\Phi) = (a_i)_{i=1}^k$.
We want to show that $e_i \in \cH_\Phi^\epsapp$ for all $i \in [k]$ and then apply \cref{lem:obv}. 
If $\theta = a_i$, then $\Phi \theta = ( a_1^\top a_i, \dots, a_i^\top a_i, \dots, a_k^\top a_i)^\top$.
By the choice of $\Phi$ the $i$th component is one and the others are all less than $\epsapp$ in absolute value.
Hence, $\|\Phi \theta - e_i\|_\infty\le \epsapp$, which completes the proof.
\end{proof}

The proposition has a worst-case flavour. Not all feature matrices have a high query complexity. 
For a silly example, the query complexity of the zero matrix $\Phi=\zeros$ satisfies $c_1^{\max}(\cH_\Phi^\epsapp) = 0$ provided $\epsapp < 1$.
That said, the matrix witnessing the claims in \cref{lem:jl} can be found with non-negligible probability by sampling the rows of $\Phi$ uniformly from the surface of
the $(d-1)$-dimensional sphere.
There is another way of writing this result, emphasising the role of the dimension rather than the number of actions.

\begin{corollary}\label{cor:badmx}
For all $\delta > \epsilon$, there exists a feature matrix $\Phi \in \R^{k \times d}$ with suitably large $k$ such that
\begin{align*}
c_\delta^{\max}(\cH^\epsilon_\Phi) \geq \frac{1}{2} \exp\left(\frac{d-1}{8} \left(\frac{\epsilon}{\delta}\right)^2\right)\,.
\end{align*}
\end{corollary}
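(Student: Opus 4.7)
}
The plan is to derive the corollary from \cref{prop:badmx} by a simple rescaling of the reward scale, turning the dependence on $k$ into a dependence on $d$ via the constraint $d \ge \lceil 8\log(k)/\epsapp^2\rceil$ of the proposition. First I would set $\epsapp' = \epsilon/\delta$, which is in $(0,1)$ since $\delta > \epsilon$ by assumption, and choose $k$ to be the largest integer with $d \ge \lceil 8\log(k)/(\epsapp')^2\rceil$. This gives $k \ge \exp(d(\epsapp')^2/8) - 1$, and (after verifying $k \ge d$, which is the only delicate point but holds automatically whenever the exponential in the conclusion exceeds $1$; otherwise the bound is vacuous) \cref{prop:badmx} produces a matrix $\Phi\in\R^{k\times d}$ with
\begin{align*}
c_1^{\max}(\cH_\Phi^{\epsapp'}) \ge \frac{k+1}{2}.
\end{align*}

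Next I would observe that the max-sound query complexity is scale-equivariant: for any $\lambda > 0$ and any $S \subset \R^k$, we have $c_\delta^{\max}(\lambda S) = c_{\delta/\lambda}^{\max}(S)$, because multiplying all rewards, the tolerance, and the hypothesis set by $\lambda$ leaves the max-soundness condition $\mu_{\hat a} > \max_a \mu_a - \delta$ unchanged. Taking $\lambda = \delta$ and noting that $\delta \cdot \cH_\Phi^{\epsapp'} = \cH_{\delta\Phi}^{\delta \epsapp'} = \cH_{\delta\Phi}^{\epsilon}$, this transfer gives
\begin{align*}
c_\delta^{\max}(\cH_{\delta\Phi}^{\epsilon}) \;=\; c_1^{\max}(\cH_\Phi^{\epsapp'}) \;\ge\; \frac{k+1}{2}.
\end{align*}
Taking $\delta\Phi$ as the feature matrix witnessing the corollary, it remains to lower-bound $(k+1)/2$ by $\tfrac{1}{2}\exp((d-1)(\epsilon/\delta)^2/8)$. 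From the choice of $k$ we have $k+1 \ge \exp(d(\epsapp')^2/8)$, and since $\exp((\epsapp')^2/8) \ge 1$, the extra factor of $d$ versus $d-1$ in the exponent absorbs this comfortably.

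The only real obstacle is a bookkeeping one: ensuring the chosen $k$ satisfies $d \in [k]$ (i.e.\ $k \ge d$) so that \cref{prop:badmx} actually applies. For small $d$ or small $\epsilon/\delta$ the target exponential can be below $d$, but in that regime the bound $\tfrac12\exp((d{-}1)(\epsilon/\delta)^2/8)$ is at most $d/2$, and one can either pick $k = d$ (trivially verifying the hypothesis) or simply note that the bound is vacuous relative to the integer-valued query complexity. Apart from this edge case, the argument is a direct reduction to \cref{prop:badmx}.
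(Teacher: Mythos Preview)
Your proposal is correct and follows essentially the same rescaling idea as the paper's proof. The only cosmetic difference is packaging: the paper goes back to \cref{lem:jl} and shows directly that $\delta e_i \in \cH^\epsilon_\Phi$ (by taking $\theta = \delta a_i$), whereas you invoke \cref{prop:badmx} as a black box and transfer via the scale-equivariance $c_\delta^{\max}(\lambda S) = c_{\delta/\lambda}^{\max}(S)$; both routes amount to the same rescaling, and the paper likewise does not fully resolve the $k \ge d$ edge case beyond the ``suitably large $k$'' caveat in the statement.
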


The proof follows by rescaling the features in \cref{prop:badmx} and is given in 
\ifsup
\cref{app:cor:badmx}.
\else
the supplementary material.
\fi

\section{Positive result}\label{sec:pos}
The negative result of the previous section is complemented with a positive result showing that the query 
complexity can be bounded independently of $k$ whenever $\delta = \Omega(\epsilon \sqrt{d})$.
For the remainder of the article we make the following assumption:

\begin{assumption}
$\Phi \in \R^{k \times d}$ has unique rows and the span of $\rows(\Phi)$ is all of $\R^d$.
\end{assumption}

We discuss the relationship between this result and \cref{prop:badmx} at the end of the section.

\begin{proposition}\label{prop:upper}
Let $\Phi \in \R^{k \times d}$ and
$\epssub > 2\epsapp(1 + \sqrt{2d})$. Then, $c^{\max}_\epssub(\cH_\Phi^\epsapp) \leq 4 d \log \log d + 16$.
\end{proposition}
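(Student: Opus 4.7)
The plan is to combine a small-support approximately $G$-optimal design (Kiefer--Wolfowitz) with a weighted least-squares estimator: the gap $\epssub > 2\epsapp(1+\sqrt{2d})$ is precisely what is needed to translate a max-norm estimation guarantee into a max-soundness guarantee via \eqref{eq:rel}.

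First I would invoke a quantitative Kiefer--Wolfowitz-type theorem to produce a distribution $\pi$ supported on $\rows(\Phi)$ with $|\Supp(\pi)| \leq 4 d \log\log d + 16$ and $\max_{a \in \rows(\Phi)} \norm{a}_{V(\pi)^{-1}}^2 \leq 2 d$, where $V(\pi) = \sum_{a} \pi(a)\, a a^\top$. The bare Kiefer--Wolfowitz theorem already gives an exact $G$-optimal design achieving the tight constant $d$, but it only bounds the support by $d(d+1)/2$; getting the support down to $O(d \log\log d)$ while paying only a factor-$2$ penalty in $G$-efficiency is the main technical ingredient, coming from a sparsification/rounding argument for $G$-optimal designs (Todd-style). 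This step is where I expect the real work to be, and the specific constants $4$ and $16$ in the theorem trace back to it.

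With the design in hand, the algorithm queries each $a \in \Supp(\pi)$ (a total of at most $4 d \log\log d + 16$ queries), observes the exact value $\mu_a$, and forms $\hat\theta = V(\pi)^{-1} \sum_a \pi(a)\, a\, \mu_a$ and $\hat\mu = \Phi \hat\theta$. Writing $\mu = \Phi \theta^* + \eta$ with $\norm{\eta}_\infty \leq \epsapp$ (possible by the definition of $\cH_\Phi^\epsapp$), one has $\hat\theta - \theta^* = V(\pi)^{-1} \sum_a \pi(a)\, a\, \eta_a$, so for any row $b$ of $\Phi$,
\begin{align*}
|b^\top(\hat\theta - \theta^*)|
  & \leq \epsapp \sum_a \pi(a)\, |b^\top V(\pi)^{-1} a| \\
  & \leq \epsapp \sqrt{\sum_a \pi(a)\, (b^\top V(\pi)^{-1} a)^2}
    = \epsapp \norm{b}_{V(\pi)^{-1}} \leq \epsapp \sqrt{2 d},
\end{align*}
where the second inequality is Cauchy--Schwarz against the probability measure $\pi$ and the equality uses $\sum_a \pi(a)\, a a^\top = V(\pi)$. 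Adding the misspecification term $|\eta_b| \leq \epsapp$ gives $\norm{\hat\mu - \mu}_\infty \leq \epsapp(1+\sqrt{2d})$, so the algorithm is sound for $(\cH_\Phi^\epsapp, \tau)$ for every $\tau > \epsapp(1+\sqrt{2d})$.

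Outputting $\hat a = \argmax_b \hat\mu_b$ and invoking \eqref{eq:rel}, the same algorithm becomes max-sound for $(\cH_\Phi^\epsapp, 2\tau)$ for every such $\tau$, and hence for $(\cH_\Phi^\epsapp, \epssub)$ since $\epssub > 2\epsapp(1+\sqrt{2d})$. This bounds $c^{\max}_\epssub(\cH_\Phi^\epsapp)$ by $|\Supp(\pi)| \leq 4 d \log\log d + 16$. The only non-routine ingredient is the thin approximately $G$-optimal design from the first step; the estimation analysis is then a short linear-algebra calculation.
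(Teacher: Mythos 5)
Your proposal is correct and follows essentially the same route as the paper: a Todd-style near-optimal design with $g(\rho)\le 2d$ and support of size at most $4d\log\log d + 16$, the weighted least-squares estimator, and the H\"older--Cauchy--Schwarz bound giving $\norm{\hat\mu-\mu}_\infty \le \epsapp(1+\sqrt{2d})$. The only cosmetic difference is that you pass through the generic relation \eqref{eq:rel} to convert soundness into max-soundness, whereas the paper argues the max-soundness step directly on $\hat a = \argmax_a \ip{a,\hat\theta}$; both yield the same factor of $2$.
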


The proof relies on the Kiefer--Wolfowitz theorem, which we now recall.
Given a probability distribution $\rho : \rows(\Phi) \to [0,1]$, let $G(\rho) \in \R^{d\times d}$ and $g(\rho) \in \R$ be given by
\begin{align*}
G(\rho) &= \sum_{a \in \rows(\Phi)} \rho(a) a a^\top\,, & 
g(\rho) &= \max_{a\in \rows(\Phi)} \norm{a}_{G(\rho)^{-1}}^2 \,.
\end{align*}

\begin{theorem}[\citealt{KW60}]\label{thm:kiefer-wolfowitz} 
The following are equivalent: 
\begin{enumerate}[itemsep=0pt,nosep]
\item $\rho^*$ is a minimiser of $g$. 
\item $\rho^*$ is a maximiser of $f(\rho) = \log \det G(\rho)$. \label{thm:des:kw:2}
\item $g(\rho^*) = d$.  \label{thm:des:kw:3}
\end{enumerate}
Furthermore, there exists a minimiser $\rho^*$ of $g$ such that the support of $\rho^*$ has cardinality at most $|\Supp(\rho)| \leq d(d+1)/2$.
\end{theorem}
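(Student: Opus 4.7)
My plan is to first establish the equivalence of the three conditions by analyzing the first-order optimality conditions of $f$, then handle the support bound by a Caratheodory-type reduction argument using the linear-dependence structure of rank-one matrices in the space of symmetric $d\times d$ matrices.

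For the equivalences, I would start with the easy observation that $g(\rho) \geq d$ for every probability distribution $\rho$ on $\rows(\Phi)$. This follows from the identity
\begin{equation*}
\sum_{a} \rho(a)\,\|a\|_{G(\rho)^{-1}}^2 = \operatorname{tr}\!\big(G(\rho)^{-1} G(\rho)\big) = d,
\end{equation*}
so the maximum $g(\rho)$ is at least the weighted average $d$. Consequently $g(\rho^*)=d$ already characterises the minimisers of $g$, giving (1) $\Leftrightarrow$ (3). To connect these with (2), I would compute the directional derivative of $f$ along $\rho_t = (1-t)\rho + t\delta_a$ using the Jacobi formula $\frac{d}{dt}\log\det(M+tN) = \operatorname{tr}(M^{-1}N)$, obtaining
\begin{equation*}
\left.\tfrac{d}{dt} f(\rho_t)\right|_{t=0} = \|a\|_{G(\rho)^{-1}}^2 - d.
\end{equation*}
Since $\rho \mapsto G(\rho)$ is linear and $\log\det$ is concave on positive-definite matrices, $f$ is concave on the simplex over $\rows(\Phi)$, so $\rho^*$ maximises $f$ if and only if every feasible directional derivative is non-positive; by the computation, this is equivalent to $\|a\|_{G(\rho^*)^{-1}}^2 \leq d$ for all $a \in \rows(\Phi)$, i.e.\ $g(\rho^*)\leq d$. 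Combined with the lower bound this gives (2) $\Leftrightarrow$ (3), closing the loop.

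For the support bound, I would apply a support-reduction argument. Suppose $\rho^*$ is an optimal design with support $\{a_1,\ldots,a_s\}$ and $s > d(d+1)/2$. Since the symmetric matrices $\{a_i a_i^\top\}$ live in the $d(d+1)/2$-dimensional space of symmetric $d\times d$ matrices, they are linearly dependent: there exist scalars $\beta_i$, not all zero, with $\sum_i \beta_i\, a_i a_i^\top = 0$. Consider the perturbation $\rho_t = \rho^* + t\beta$ (extended by $0$ off the support). Then $G(\rho_t) = G(\rho^*)$ for all $t$. First I would rule out $\sum_i \beta_i \neq 0$: normalising $\tilde\rho_t = \rho_t/s_t$ with $s_t = 1 + t\sum_i\beta_i$, one checks $f(\tilde\rho_t) = f(\rho^*) - d\log s_t$, so choosing the sign of $t$ to make $s_t<1$ strictly increases $f$, contradicting optimality of $\rho^*$. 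Hence $\sum_i\beta_i=0$, and $\rho_t$ stays in the simplex; pushing $t$ to the first boundary where some coordinate vanishes produces another optimal design with strictly smaller support. Iterating yields an optimal design with support size at most $d(d+1)/2$.

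The main subtlety I anticipate is the support-bound step: the naïve Caratheodory argument only gives $d(d+1)/2+1$, so one really needs the two-case analysis on whether $\sum_i \beta_i$ vanishes, exploiting that rescaling the design multiplies $\det G$ by a $d$-th power and hence can strictly improve $f$ unless the coefficients balance to zero. The equivalences themselves are essentially bookkeeping around the concavity of $f$ and the trace identity, and require no delicate estimates.
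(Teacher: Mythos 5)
Your proof is correct, but there is nothing in the paper to compare it against: \cref{thm:kiefer-wolfowitz} is imported by citation from \citet{KW60} and the paper supplies no proof of its own. What you have written is the standard argument --- the trace identity $\sum_a \rho(a)\norm{a}^2_{G(\rho)^{-1}} = \operatorname{tr}(G(\rho)^{-1}G(\rho)) = d$ for the lower bound $g \geq d$, the Jacobi-formula computation of the directional derivative of $\log\det$ towards the vertices $\delta_a$ of the simplex combined with concavity for $(2)\Leftrightarrow(3)$, and the linear-dependence-of-rank-one-matrices reduction with the two-case analysis on $\sum_i\beta_i$ for the support bound of $d(d+1)/2$ rather than the naive $d(d+1)/2+1$. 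Two small points of hygiene. First, the order of your argument momentarily overstates things: after establishing only $g\geq d$ you assert $(1)\Leftrightarrow(3)$, but at that stage you have only $(3)\Rightarrow(1)$; the converse needs the infimum of $g$ to actually equal $d$, which you obtain only after $(2)\Leftrightarrow(3)$ together with the existence of a maximiser of $f$ on the compact simplex (continuity of $\det G(\cdot)$ plus the standing assumption that $\rows(\Phi)$ spans $\R^d$, which guarantees the maximiser has $G$ nonsingular so that the derivative formula applies). Your phrase ``closing the loop'' suggests you know this, but the existence step should be said explicitly since it is exactly what makes $(1)\Rightarrow(3)$ true. Second, in the support-reduction step you should note that when $\sum_i\beta_i=0$ and $\beta\neq\zeros$ the coefficients take both signs, so moving $t$ in either direction does drive some coordinate of the support to zero while keeping the rest nonnegative; this is what licenses ``pushing $t$ to the first boundary.'' With those two sentences added, the proof is complete.
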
 

The distribution $\rho^*$ is called an (optimal) experimental design and
the elements of its support are called its core set.
Intuitively, when covariates are sampled from $\rho$, then $g(\rho)$ 
is proportional to the maximum variance of the corresponding least-squares estimator 
over all directions in $\rows(\Phi)$. Hence, minimising $g$ corresponds to minimising the worst-case variance of the resulting least-squares estimator.
A geometric interpretation is that the core set lies on the boundary of the central ellipsoid of minimum volume that contains $\rows(\Phi)$.
The next theorem shows that there exists a near-optimal design with a small core set.
The proof follows immediately from part (ii) of lemma 3.9 in the book by \citet{Tod16},
which also provides an algorithm for computing such a distribution in roughly order $k d^2$ computation steps. 

\begin{theorem}\label{thm:todd}
There exists a probability distribution $\rho$ such that $g(\rho) \leq 2d$ and the core set of $\rho$ has size at most $4d \log \log(d) + 16$.
\end{theorem}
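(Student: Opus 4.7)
The plan is to apply a Frank--Wolfe (Wynn) type iteration to maximise $f(\rho) = \log\det G(\rho)$, exploiting the characterisations in Theorem~\ref{thm:kiefer-wolfowitz}. At iteration $t$, pick a most-violating atom $a_t \in \argmax_{a \in \rows(\Phi)} \norm{a}_{G(\rho_t)^{-1}}^2$ and set $\rho_{t+1} = (1-\gamma_t) \rho_t + \gamma_t \mathbf{1}_{\{a_t\}}$ with exact line-search step $\gamma_t = (g_t - d)/(d(g_t-1))$, where $g_t := g(\rho_t)$. Each such step adds at most one atom to the support, so a bound on the number of iterations translates directly to a bound on $|\Supp(\rho_T)|$, up to the cost of a modest initialisation (e.g.\ a volumetric greedy choice of $d$ rows making $G(\rho_0)$ invertible with initial $\log\det$-gap $O(d\log d)$).

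The key per-step identity, obtained by plugging $G(\rho_{t+1}) = (1-\gamma_t) G(\rho_t) + \gamma_t a_t a_t^\top$ into the matrix determinant lemma and using the chosen $\gamma_t$, is
\[
f(\rho_{t+1}) - f(\rho_t) \;=\; d\log(g_t/d) - (d-1)\log((g_t-1)/(d-1)) \;=:\; \psi(g_t)\,,
\]
which is non-negative and vanishes iff $g_t = d$. I would then split the analysis into two phases. In a \emph{coarse phase} (while $g_t \geq e d$), $\psi(g_t)$ is bounded below by an absolute constant, so the $\log\det$-gap $\Delta_t := f(\rho^*) - f(\rho_t)$ shrinks by $\Omega(1)$ per step and $g_t$ is driven within a constant factor of $d$ in $O(d)$ iterations. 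In a \emph{fine phase} (with $g_t = \Theta(d)$), a second-order Taylor expansion of $\psi$ together with local strong concavity of $\log\det$ at its maximiser yields a Newton-type quadratic recursion $g_{t+1}/d - 1 \leq c\,(g_t/d - 1)^2$ for some absolute $c$. Iterating such a recursion from $O(1)$ down to the threshold $g_t \leq 2d$ requires only $O(\log\log d)$ further steps; summing the two phase costs and tracking constants yields $4d\log\log d + 16$.

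The main obstacle is the fine-phase analysis: the generic Frank--Wolfe rate is only $1/t$, and extracting genuinely quadratic (Newton-like) convergence depends critically on the specific optimal step size and on the self-concordance-like structure of $\log\det$ near its maximiser. Pinning down the precise constants $4$ and $16$ is then careful bookkeeping of the coarse-phase length, the initialisation overhead, and the quadratic-phase constants. Rather than redoing that combinatorics from scratch, I would invoke part~(ii) of Lemma~3.9 in Todd's monograph, which packages exactly this analysis (and furnishes a concrete algorithm with $O(kd^2)$ running time), to conclude.
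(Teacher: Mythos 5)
Your operative proof step is identical to the paper's: the authors establish \cref{thm:todd} by citing part~(ii) of Lemma~3.9 of \citet{Tod16} and nothing more, so ending on that citation is acceptable. The parts of your sketch that set up the iteration are also sound: the optimal step size $\gamma_t=(g_t-d)/(d(g_t-1))$, the per-step gain $\psi(g_t)=d\log(g_t/d)-(d-1)\log\bigl((g_t-1)/(d-1)\bigr)$ obtained from the matrix determinant lemma, and the accounting that the core-set size is the initialisation size plus the number of add-iterations.

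However, the two-phase convergence mechanism you describe is not what is inside Todd's lemma, and taken on its own it would not deliver the stated bound. Writing $\delta_t=g_t/d-1$ and $\Delta_t=f(\rho^*)-f(\rho_t)$: (a) if the initial gap is $O(d\log d)$ and each coarse step gains only $\Omega(1)$, the coarse phase costs $O(d\log d)$ iterations, not $O(d)$; the actual argument combines the sharper lower bound $\psi(g_t)\gtrsim\log(1+\delta_t)$ (valid for $\delta_t\ge 1$) with the concavity/duality bound $\Delta_t\le d\log(1+\delta_t)$ to obtain geometric decay $\Delta_{t+1}\le(1-c/d)\Delta_t$, and the $\log\log d$ is exactly the number of multiples of $d/c$ needed to shrink the gap from $d\log d$ to $d$. (b) There is no Newton-type recursion $\delta_{t+1}\le c\,\delta_t^2$ for the Wynn--Frank--Wolfe iteration: a second-order expansion gives $\psi(g_t)\approx\delta_t^2/2$ near the optimum while $\Delta_t$ can be as large as $d\,\delta_t$, so for $\delta_t\le 1$ the method degrades to the usual sublinear Frank--Wolfe rate --- which is precisely why the theorem stops at the threshold $g(\rho)\le 2d$ rather than $(1+\epsilon)d$ for small $\epsilon$. (c) A consistency check exposes the problem: your phase costs sum to $O(d)+O(\log\log d)=O(d)$, which would be strictly stronger than the $4d\log\log d+16$ you are trying to prove. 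None of this invalidates your submitted proof, since you ultimately delegate to the cited lemma exactly as the paper does, but the preceding sketch should not be read as a faithful summary of that lemma's argument.
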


The proof of \cref{prop:upper} is a corollary of the following more general result about least-squares estimators 
over near-optimal designs.

\begin{proposition}\label{prop:inf}
Let $\mu \in \cH_\Phi^\epsilon$ and $\eta \in [-\beta, \beta]^k$.
Suppose that $\rho$ is a probability distribution over $\rows(\Phi)$ satisfying the conclusions of \cref{thm:todd}. 
Then $\norm{\Phi \hat \theta - \mu}_\infty \leq \epsilon + (\epsilon + \beta) \sqrt{2d}$, where
\begin{align*}
\hat \theta = G(\rho)^{-1} \sum_{a \in \rows(\Phi)} \rho(a) (\mu_a + \eta_a) a \,.
\end{align*}
\end{proposition}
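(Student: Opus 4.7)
The plan is to decompose $\mu$ using the definition of $\cH_\Phi^\epsilon$, substitute into $\hat\theta$, and bound the resulting deviation by two applications of Cauchy--Schwarz, one to expose the Kiefer--Wolfowitz constant $g(\rho)$ and the other to handle the accumulated error averaged against $\rho$.

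\textbf{Step 1: Decomposition.} Since $\mu \in \cH_\Phi^\epsilon = \Range(\Phi) + B_\infty(\epsilon)$, write $\mu = \Phi\theta_\star + \xi$ with $\theta_\star \in \R^d$ and $\|\xi\|_\infty \le \epsilon$. Then for every $a \in \rows(\Phi)$, $\mu_a = a^\top\theta_\star + \xi_a$. Substituting into the definition of $\hat\theta$ and using $G(\rho)^{-1}\sum_a \rho(a) a a^\top = I$, one obtains
\begin{equation*}
\hat\theta - \theta_\star = G(\rho)^{-1} v, \qquad v := \sum_{a \in \rows(\Phi)} \rho(a)(\xi_a + \eta_a)\, a.
\end{equation*}
Consequently, for any $b \in \rows(\Phi)$,
\begin{equation*}
b^\top \hat\theta - \mu_b = b^\top G(\rho)^{-1} v - \xi_b.
\end{equation*}

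\textbf{Step 2: Triangle and Cauchy--Schwarz.} The $|\xi_b|$ term contributes at most $\epsilon$. For the other term, Cauchy--Schwarz in the inner product induced by $G(\rho)^{-1}$ gives
\begin{equation*}
|b^\top G(\rho)^{-1} v| \le \|b\|_{G(\rho)^{-1}}\, \|v\|_{G(\rho)^{-1}}.
\end{equation*}
By \cref{thm:todd}, $\|b\|_{G(\rho)^{-1}}^2 \le g(\rho) \le 2d$, so $\|b\|_{G(\rho)^{-1}} \le \sqrt{2d}$.

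\textbf{Step 3: Bounding $\|v\|_{G(\rho)^{-1}}$.} This is the step requiring the most care; the idea is to view $v$ as a $\rho$-expectation and exploit Cauchy--Schwarz again, now against $\rho$. Using the variational formula $\|v\|_{G(\rho)^{-1}} = \sup_{\|x\|_{G(\rho)} \le 1} v^\top x$, for any $x$ with $\|x\|_{G(\rho)}\le 1$,
\begin{equation*}
(v^\top x)^2 = \Bigl(\sum_a \rho(a)(\xi_a+\eta_a)(a^\top x)\Bigr)^2 \le \Bigl(\sum_a \rho(a)(\xi_a+\eta_a)^2\Bigr)\Bigl(\sum_a \rho(a)(a^\top x)^2\Bigr).
\end{equation*}
The first factor is at most $(\epsilon+\beta)^2$ since $|\xi_a+\eta_a| \le \epsilon+\beta$, and the second equals $x^\top G(\rho) x \le 1$. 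Taking the supremum yields $\|v\|_{G(\rho)^{-1}} \le \epsilon + \beta$.

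\textbf{Step 4: Combine.} Plugging the two bounds back in gives
\begin{equation*}
|b^\top \hat\theta - \mu_b| \le \epsilon + \sqrt{2d}\,(\epsilon + \beta) = \epsilon + (\epsilon+\beta)\sqrt{2d},
\end{equation*}
and since this holds for every row $b$, taking the maximum completes the proof. The only non-routine manoeuvre is Step 3, where one must recognise that $v$ is literally an expectation under $\rho$ so that Cauchy--Schwarz separates the misspecification/noise magnitude from the $G(\rho)$-norm of $x$.
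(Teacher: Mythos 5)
Your proof is correct and follows essentially the same route as the paper: the same decomposition $\mu=\Phi\theta_\star+\xi$, the same expression for $\hat\theta-\theta_\star$, and two applications of Cauchy--Schwarz combined with $g(\rho)\le 2d$. The only cosmetic difference is bookkeeping: the paper bounds $b^\top G(\rho)^{-1}v$ directly via H\"older followed by Jensen, arriving at $(\epsilon+\beta)\norm{b}_{G(\rho)^{-1}}$, whereas you factor it as $\norm{b}_{G(\rho)^{-1}}\norm{v}_{G(\rho)^{-1}}$ and bound $\norm{v}_{G(\rho)^{-1}}\le\epsilon+\beta$ separately via the dual-norm characterisation --- both yield the identical final bound.
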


The problem can be reduced to the case where $\eta = \zeros$ by noting that $\mu + \eta \in \cH_\Phi^{\epsilon+\beta}$
The only disadvantage is that this leads to an additional additive dependence on $\beta$.

\begin{proof}
Let $\mu = \Phi \theta + \Delta$ where $\norm{\Delta}_\infty \leq \epsilon$.
The difference between $\hat \theta$ and $\theta$ can be written as
\begin{align*}
\hat \theta - \theta 
&= G(\rho)^{-1} \sum_{a \in \rows(\Phi)} \rho(a) a\left(a^\top \theta + \Delta_a + \eta_a\right) - \theta \\
&= G(\rho)^{-1} \sum_{a \in \rows(\Phi)} \rho(a) (\Delta_a + \eta_a) a\,.
\end{align*}
Next, for any $b \in \rows(\Phi)$,
\begin{align*}
&\ip{b, \hat \theta - \theta} 
= \bip{b, G(\rho)^{-1} \sum_{a \in \rows(\Phi)} \rho(a) (\Delta_a+\eta_a) a} \nonumber \\
&= \sum_{a \in \rows(\Phi)} \rho(a) (\Delta_a + \eta_a) \ip{b, G(\rho)^{-1} a} \nonumber \\
&\leq (\epsapp + \beta) \sum_{a \in \rows(\Phi)} \rho(a) |\ip{b, G(\rho)^{-1} a}| \nonumber \\ 
&\leq (\epsapp + \beta) \sqrt{\sum_{a \in \rows(\Phi)} \rho(a) \ip{b, G(\rho)^{-1} a}^2} \nonumber \\
&= (\epsapp + \beta) \sqrt{\sum_{a \in \rows(\Phi)} \rho(a) b^\top G(\rho)^{-1} a a^\top G(\rho)^{-1} b}  \\
&= (\epsapp + \beta) \sqrt{\norm{b}^2_{G(\rho)^{-1}}} 
\leq (\epsapp + \beta) \sqrt{g(\rho)} \leq (\epsapp + \beta) \sqrt{2d}\,, 
\end{align*}
where the first inequality follows from H\"older's inequality and the 
fact that $\norm{\Delta}_\infty \leq \epsapp$, the second by Jensen's inequality and the last two by our choice of $\rho$ and \cref{thm:todd}.
Therefore
\begin{align*}
\ip{b, \hat \theta} \leq \ip{b, \theta} + (\epsapp+\beta) \sqrt{2d} \leq \mu_b + \epsapp + (\epsapp + \beta) \sqrt{2d}\,.
\end{align*}
A symmetrical argument completes the proof.
\end{proof}

\begin{proof}[Proof of \cref{prop:upper}]
Let $\rho$ be a probability distribution over $\rows(\Phi)$ satisfying the conclusions of \cref{thm:todd}.
Consider the algorithm that evaluates $\mu$ on each point of the support of $\rho$ and computes the least-squares
estimator defined in \cref{prop:inf} and predicts $\hat a = \argmax_{a \in \rows(\Phi)} \ip{a, \hat \theta}$.
Let $a^*=\argmax_{a\in \rows(\Phi)} \mu_a$ be the optimal action. 
Then by \cref{prop:inf} with $\eta = \zeros$,
\begin{align*}
\mu_{\hat a} 
&\geq \ip{\hat a, \hat \theta} - \epsapp\left(1 + \sqrt{2d}\right) 
\geq \ip{a^*, \hat \theta} - \epsapp\left(1 + \sqrt{2d}\right) \\
&\geq \mu_{a^*} - 2\epsapp \left(1 + \sqrt{2d}\right) > \mu_{a^*} - \epssub\,. 
\qedhere
\end{align*}
\end{proof}

\paragraph{Discussion}
\cref{cor:badmx} shows that the query complexity is exponential in $d$ when
$\epssub$ is not much larger than $\epsapp$, but is benign when $\epssub = \Omega(\epsapp \sqrt{d})$.
The positive result shows that in the latter regime the complexity is more or less linear in $d$. Precisely,
\begin{align*}
\min\left\{\epssub : c^{\max}_\epssub(\cH_\Phi^\epsapp) \leq 4 d \log \log(d) + 16 \right\} = O(\epsapp \sqrt{d})\,.
\end{align*}

The message is that there is a sharp tradeoff between query complexity and error. The learner pays dearly in terms of query complexity if they demand an estimation error that is close to the approximation error.
By sacrificing a factor of $\sqrt{d}$ in estimation error, the query complexity is practically just linear in $d$.

\paragraph{Comparison to supervised learning}
As noted by \citet{du2019good}, the negative result does not hold in supervised learning, where the learner is judged on its average prediction error
with respect to the data generating distribution.
Suppose that $a,a_1,\ldots,a_n$ are sampled i.i.d.\ from some distribution $P$ on $\rows(\Phi)$ and the learner observes $(a_t)_{t=1}^n$ and $(\mu_{a_t})_{t=1}^n$. 
\begin{align*}
\hat \theta = \left(\sum_{t=1}^n a_t a_t^\top\right)^{-1} \sum_{t=1}^n a_t \mu_{a_t}\,.
\end{align*}
Then, by making reasonable boundedness and span assumptions on $\rows(\Phi)$, and by combining the results in chapters 13 and 14 of \citep{Wai19}, with high probability,
\begin{align*}
\E\left[(a^\top \hat \theta - \mu_a)^2 \,\Big|\, \hat \theta\right] = O\left(\frac{d}{n} + \epsilon^2\right)\,. 
\end{align*}
Notice, there is no $d$ multiplying the dependence on the approximation error.
The fundamental difference is that $a$ is sampled from $P$. The quantity $\max_{a \in \rows(\Phi)} (a^\top \hat \theta - \mu_a)^2$ behaves quite differently, as the lower bound shows.

\paragraph{Feature-dependent bounds}
The negative result in Section~\ref{sec:negative} shows that there \textit{exist} feature matrices for which the
learner must query exponentially many actions or suffer an estimation error that expands the approximation error by a factor of $\sqrt{d}$.
On the other hand, Proposition~\ref{prop:upper} shows that for \textit{any} feature matrix, there exists a learner that queries $O(d \log \log(d))$ 
actions for an estimation error of $\epsilon \sqrt{d}$, roughly matching the lower bound.
One might wonder whether or not there exists a feature-dependent measure that characterises the blowup in estimation error in terms of the feature matrix and
query budget. One such measure is given here. Given a set $C \subseteq [k]$ with $|C| = q$, let $\Phi_C \in \R^{q \times d}$ be the matrix obtained from $\Phi$
by restricting to those rows indexed by $C$. Define
\begin{align*}
\lambda_q(\Phi) 
&= \min_{\substack{C \subset [k], |C| = q}} \max_{v \in \R^d \setminus \{\zeros\}} \frac{\norm{\Phi v}_\infty}{\norm{\Phi_C v}_\infty}\,.
\end{align*}

\begin{proposition}\label{prop:bounds}
Let $1 \leq q < k$ and 
$\delta_1 = \epsilon(1 + \lambda_q(\Phi))$ and $\delta_2 > \epsilon(1 + 2\lambda_q(\Phi))$.
Then, 
\begin{align*}
c^{\est}_{\delta_1}(\cH^\epsilon_\Phi) > q \geq c^{\est}_{\delta_2}(\cH^\epsilon_\Phi)\,.
\end{align*}
\end{proposition}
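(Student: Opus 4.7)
The plan is to prove the two inequalities separately, with $\lambda_q(\Phi)$ as the key structural quantity in both directions. For the upper bound $c^{\est}_{\delta_2}(\cH^\epsilon_\Phi) \leq q$, I would let $C^\star$ with $|C^\star| = q$ achieve the minimum defining $\lambda_q(\Phi)$, query every $a \in C^\star$ to obtain $\mu_a$, and return $\hat{\mu} = \Phi \hat{\theta}$ where $\hat{\theta} \in \argmin_{\theta'} \|\Phi_{C^\star} \theta' - (\mu_a)_{a \in C^\star}\|_\infty$. Writing $\mu = \Phi \theta + \Delta$ with $\|\Delta\|_\infty \leq \epsilon$, feasibility of $\theta$ and a triangle inequality give $\|\Phi_{C^\star}(\hat{\theta} - \theta)\|_\infty \leq 2\epsilon$; the definition of $\lambda_q$ propagates this to $\|\Phi(\hat{\theta} - \theta)\|_\infty \leq 2\epsilon \lambda_q(\Phi)$, and a final triangle inequality yields $\|\hat{\mu} - \mu\|_\infty \leq \epsilon(1 + 2\lambda_q(\Phi)) < \delta_2$.

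For the lower bound $c^{\est}_{\delta_1}(\cH^\epsilon_\Phi) > q$ I plan a symmetric two-hypothesis argument. Assume $\sA$ is sound and deterministic (the randomised case reduces by conditioning on internal randomness) and uses at most $q$ queries. Feed $\sA$ fictitious responses of $0$ at each step, producing an adaptive query set $C$ of size at most $q$ and an output $\hat{\mu}$. By monotonicity of $\lambda_q$ in $q$ (extending any subset to size $q$ only increases the denominator $\|\Phi_C v\|_\infty$), $\max_{v \neq 0} \|\Phi v\|_\infty / \|\Phi_C v\|_\infty \geq \lambda_q(\Phi)$, so I can choose $w$ with $\|\Phi_C w\|_\infty = 2\epsilon$ and $\|\Phi w\|_\infty \geq 2\epsilon \lambda_q(\Phi)$.

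I would then set $\mu^{(\pm)} = \pm\tfrac{1}{2}\Phi w + \Delta^{(\pm)}$ with $(\Delta^{(\pm)})_a = \mp\tfrac{1}{2}(\Phi w)_a$ for $a \in C$ (so $(\mu^{(\pm)})_a = 0$ on $C$) and $(\Delta^{(\pm)})_a = \pm \epsilon \, \mathrm{sgn}((\Phi w)_a)$ for $a \notin C$ (to push separation off $C$). Both choices satisfy $\|\Delta^{(\pm)}\|_\infty \leq \epsilon$, so $\mu^{(\pm)} \in \cH^\epsilon_\Phi$, and since they agree with $0$ on $C$ the algorithm $\sA$ reproduces the same transcript on either input and therefore returns the same $\hat{\mu}$. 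Soundness forces $\|\mu^{(+)} - \mu^{(-)}\|_\infty < 2\delta_1$, yet the construction yields $|(\mu^{(+)} - \mu^{(-)})_a| = 2|(\Phi w)_a| + 2\epsilon$ for each $a \notin C$, which is at least $2\epsilon\lambda_q(\Phi) + 2\epsilon = 2\delta_1$ provided the maximum of $|\Phi w|$ lies on $C^c$, giving the contradiction.

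The main obstacle is verifying that the maximum of $|\Phi w|$ really lies on $C^c$; this is automatic from $\|\Phi w\|_\infty \geq 2\epsilon\lambda_q(\Phi) > 2\epsilon = \|\Phi_C w\|_\infty$ whenever $\lambda_q(\Phi) > 1$, while the degenerate boundary $\lambda_q(\Phi) = 1$ needs separate handling (or absorption into the strict inequality defining $\delta_2$). A secondary subtlety to track is that the adaptive nature of $\sA$ must be tamed by the transcript-coupling argument: once the observations agree prefix-wise with the fictitious $0$ responses, all subsequent queries and the final output must also coincide, which is exactly what lets me treat $C$ as fixed during the construction of $\mu^{(\pm)}$.
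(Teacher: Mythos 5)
Your proof takes essentially the same route as the paper's: the same max-norm least-squares estimator on the minimising set $C^\star$ for the upper bound, and the same symmetric two-point (diameter-of-the-consistent-set) argument around $\mu = \zeros$ for the lower bound, with your $\mu^{(\pm)}$ playing the role of the paper's witnesses $\nu, -\nu$. The obstacle you flag---that the maximiser of $|\Phi w|$ must lie off $C$, which can fail when $\lambda_q(\Phi) = 1$---is present but unaddressed in the paper's own proof (it silently replaces $\max_{a\notin C}|(\Phi\theta)_a|$ by $\norm{\Phi\theta}_\infty$), so you have not missed an idea the paper supplies; in the regime $\lambda_q(\Phi) > 1$ both arguments go through.
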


\ifsup
The proof is supplied in Appendix~\ref{app:bounds}.
\else
The proof is supplied in the supplementary material.
\fi
By (\ref{eq:rel}), it also holds that $c^{\max}_{2\delta_2}(\cH^\epsilon_\Phi) \leq q$. Currently we do not have a corresponding lower bound, however.

\section{Misspecified linear bandits}\label{sec:bandits}

Here we consider the classic stochastic bandit where the mean rewards are nearly a linear function of their associated features. 
We assume for simplicity that no two actions have the same features. In case this does not hold, a representative action can be chosen for each feature without
changing the main theorem.
Let $\Phi \in \R^{k \times d}$ and $\mu \in \cH_\Phi^\epsapp$. 
In rounds $t \in [n]$, the learner chooses actions $(X_t)_{t=1}^n$ with $X_t \in \rows(\Phi)$ and the reward is $Y_t = \mu_{X_t} + \eta_t$ 
where $(\eta_t)_{t=1}^n$ is a sequence of independent $1$-subgaussian random variables.
The optimal action has expected reward $\mu^* = \max_{a \in \cA} \mu_a$ and the
expected regret is $R_n = \E[\sum_{t=1}^n \mu^* - \mu_{X_t}]$.
The idea is to use essentially the same elimination algorithm as \citep[chapter~22]{LaSz18:book}, which is summarised in \cref{alg:elim}.
In each episode, the algorithm computes a near-optimal design over a subset of the actions that are plausibly optimal.
It then chooses each action in proportion to the optimal design and eliminates arms that appear sufficiently suboptimal. 

\begin{proposition}\label{prop:linear}
When $\alpha = 1/(kn)$ and $C$ is a suitably large universal constant 
and $\max_a \mu_a - \min_a \mu_a\le 1$,
\cref{alg:elim} satisfies
\begin{align*}
    R_n \leq C\left[\sqrt{d n \log(nk)} + \epsapp n \sqrt{d} \log(n)\right]\,.
\end{align*}
\end{proposition}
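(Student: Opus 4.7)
The plan is to analyse \cref{alg:elim}, the phased elimination algorithm adapted from the $G$-optimal design approach for linear bandits in chapter~22 of \citep{LaSz18:book}. In each phase $\ell$ the algorithm maintains an active set $\cA_\ell\subseteq\rows(\Phi)$, computes a near-optimal design $\pi_\ell$ supported on $\cA_\ell$ with $g(\pi_\ell)\le 2d$ via \cref{thm:todd}, plays each $a\in\Supp(\pi_\ell)$ exactly $T_\ell(a) = \lceil 2d\pi_\ell(a)\log(2k\ell(\ell+1)/\alpha)/\epsilon_\ell^2\rceil$ times with $\epsilon_\ell = 2^{-\ell}$, forms the weighted least-squares estimator $\hat\theta_\ell$ from the averaged rewards, and removes any $a\in\cA_\ell$ for which $\max_{b\in\cA_\ell}\ip{b-a,\hat\theta_\ell} > 2(\epsilon_\ell + \epsilon\sqrt{2d})$.

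The first step is a uniform confidence bound obtained by combining sub-Gaussian concentration with \cref{prop:inf}. The choice of $T_\ell(a)$ ensures that the averaged noise $\bar\eta_a^\ell$ on arm $a$ in phase $\ell$ satisfies $|\bar\eta_a^\ell|\le \epsilon_\ell/\sqrt{2d}$ with probability at least $1-\alpha/(k\ell(\ell+1))$; a union bound makes all of these events hold simultaneously in every phase on an event $\cE$ of probability at least $1-\alpha$. Applying \cref{prop:inf} with this $\beta$ then yields $\norm{\Phi\hat\theta_\ell - \mu}_\infty \le \epsilon_\ell + 2\epsilon\sqrt{2d}$ in every phase on $\cE$. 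A standard elimination argument shows that, on $\cE$, the optimal arm $a^* = \argmax_a\mu_a$ is never eliminated, and every surviving $a\in\cA_{\ell+1}$ has gap $\mu_{a^*}-\mu_a \le 4(\epsilon_\ell + \epsilon\sqrt{2d})$.

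The regret is then summed phase-by-phase. With total phase length $T_\ell := \sum_a T_\ell(a) = O(d\log(kn)/\epsilon_\ell^2)$ and every pulled arm in $\cA_\ell$, the phase-$\ell$ regret is at most $T_\ell\cdot 4(\epsilon_{\ell-1} + \epsilon\sqrt{2d}) = O(d\log(kn)/\epsilon_\ell) + O(\epsilon\sqrt{d}\cdot T_\ell)$. Summing over phases $\ell = 1,\dots,L$ where $L = O(\log n)$ is the final phase, the budget $\sum_\ell T_\ell \le n$ forces $2^L = O(\sqrt{n/(d\log(nk))})$, so the first sum telescopes to $O(\sqrt{dn\log(nk)})$. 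The second sum is bounded, crudely but conveniently, by the per-phase contribution $\epsilon\sqrt{d}\cdot T_\ell$ applied across all $L = O(\log n)$ phases where each $T_\ell \le n$, delivering the stated $O(\epsilon n\sqrt{d}\log n)$ term. The failure event $\cE^c$ has probability $\le\alpha = 1/(kn)$ and, using $\max_a\mu_a-\min_a\mu_a\le 1$, contributes $O(1)$ expected regret.

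The main obstacle is the irreducible misspecification floor in the confidence bound. Because the $\epsilon\sqrt{d}$ term does not shrink with the sample count, the elimination threshold has to be inflated by $2\epsilon\sqrt{2d}$, and every arm surviving to phase $\ell$ may still carry suboptimality $\Omega(\epsilon\sqrt{d})$; tracking this floor consistently across phases, and choosing $\alpha$ together with the phase schedule so that the confidence bound, the sample budget, and the regret accounting all line up, is where the bulk of the technical work lies.
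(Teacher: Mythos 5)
There is a genuine mismatch between what you prove and what the proposition asserts. Your elimination rule removes $a$ when $\max_{b\in\cA_\ell}\ip{b-a,\hat\theta_\ell} > 2(\epsilon_\ell + \epsilon\sqrt{2d})$, i.e.\ you inflate the threshold by the misspecification term. This requires the algorithm to know $\epsilon$, whereas \cref{alg:elim} as stated uses the threshold $2\sqrt{(4d/m)\log(1/\alpha)}$, which depends only on $d$, $m$ and $\alpha$. The consequence is that your central claim --- that on the good event the optimal arm $a^*$ is never eliminated --- is exactly the step that fails for the actual algorithm: the statistical width shrinks as the phase length grows, but the bias $O(\epsilon\sqrt{d})$ in $\ip{b,\hat\theta-\theta}$ does not, so once the threshold drops below the bias the optimal arm can be kicked out. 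The paper is explicit that this cannot be avoided, and the whole point of its accounting is to show instead that whenever $a^*$ is eliminated, some arm within $2\epsilon(1+2\sqrt{d})$ of it is retained; the suboptimality of the best surviving arm then degrades additively across phases, $\delta_\ell \leq 2\epsilon(\ell-1)(1+2\sqrt{d})$, and summing this over the $O(\log n)$ phases is precisely where the $\log(n)$ factor on the $\epsilon n\sqrt{d}$ term comes from. In your write-up that factor appears only as a deliberately crude bound on a sum that is actually $O(\epsilon n \sqrt d)$, which is a tell-tale sign that you have analysed a different algorithm: what you have proved is the ``known $\epsilon$'' variant discussed in \cref{rem:known}, which indeed achieves the bound without the logarithm but is not \cref{alg:elim}.

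To repair the argument for the stated algorithm, keep the $\epsilon$-free threshold, prove the two one-sided statements separately --- (a) any arm surviving a phase has gap at most $O(\sqrt{(d/m)\log(1/\alpha)} + \epsilon\sqrt d)$ relative to the best arm \emph{still active at the start of that phase}, and (b) some arm within $O(\epsilon\sqrt d)$ of that best active arm survives --- and then propagate (b) across phases to control how far the active set's best arm can drift from $a^*$. The remaining ingredients of your proposal (the design-based least-squares bound via \cref{prop:inf}, the union bound with $\alpha = 1/(kn)$, the geometric phase schedule, and the telescoping of the $\sqrt{(d/m)\log(1/\alpha)}$ terms against the budget $n$) are all in line with the paper's proof and carry over unchanged.
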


\ifsup
In Appendix~\ref{sec:bandit-lower}, 
\else
In the supplementary material,
\fi
we show that the bound in Proposition~\ref{prop:linear} is tight up to logarithmic factors in the interesting regime where $k$ is comparable to $n$.

\begin{proof}
Let $\mu = \Phi \theta + \Delta$ with $\norm{\Delta}_\infty \leq \epsapp$, which exists by the assumption that $\mu \in \cH_\Phi^\epsapp$.
We only analyse the behaviour of the algorithm within an episode, showing that the least-squares estimator is guaranteed
to have sufficient accuracy so that (a) arms that are sufficiently suboptimal are eliminated and (b) some near-optimal arms are retained. Fix any $b\in \cA$.
Using the notation in \cref{alg:elim},
\begin{align}
&\ip{b, \hat \theta - \theta} 
= \left|b^\top G^{-1} \sum_{s=1}^u \Delta_{X_s} X_s + b^\top G^{-1} \sum_{s=1}^u X_s \eta_s\right| \nonumber \\
&\,\,\leq \left|b^\top G^{-1} \sum_{a \in \cA} u(a) a \Delta_a \right| + \left|b^\top G^{-1} \sum_{s=1}^u X_s \eta_s\right|\,.
\label{eq:bandit1}
\end{align}
The first term is bounded using Jensen's inequality as before:
\begin{align*}
&\left|b^\top G^{-1} \sum_{a \in \cA} u(a) \Delta_a a\right|
\leq \epsapp \sum_{a \in \cA} u(a) \left|b^\top G^{-1} a\right| \\
&\quad\leq \epsapp \sqrt{\left(\sum_{a \in \cA} u(a)\right) b^\top \sum_{a \in \cA} u(a) G^{-1} aa^\top G^{-1} b} \\
&\quad= \epsapp \sqrt{\sum_{a \in \cA} u(a) \norm{b}^2_{G^{-1}}} 
\leq \epsapp \sqrt{\frac{2d u}{m}}
\leq 2\epsapp \sqrt{d}\,,
\end{align*}
where the first inequality follows form H\"older's inequality,
the second is Jensen's inequality and the last follows from the exploration distribution that guarantees $\norm{b}^2_{G^{-1}} \leq 2d/m$.
The second term in \cref{eq:bandit1} is bounded using standard concentration bounds. Preciesly, by eq. (20.2) of \cite{LaSz18:book}, with probability at least $1 - 2\alpha$,
\begin{align*}
\left|b^\top G^{-1} \sum_{s=1}^u X_s \eta_s\right| 
&\leq \norm{b}_{G^{-1}} \sqrt{2 \log\left(\frac{1}{\alpha}\right)} \\
&\leq \sqrt{\frac{4d}{m} \log\left(\frac{1}{\alpha}\right)}
\end{align*}
and $|\ip{b, \hat \theta - \theta} |\le  2\epsapp \sqrt{d} + 
\sqrt{\frac{4d}{m} \log\left(\frac{1}{\alpha}\right)}$.
Continuing with standard calculations, provided in
\ifsup
\cref{app:bandit},
\else
the supplementary material
\fi
one gets that the expected regret satisfies 
\begin{align*}
R_n \leq C\left[\sqrt{d n \log(nk)} + \epsapp n \sqrt{d} \log(n)\right] 
\end{align*}
where $C > 0$ is a suitably large universal constant. The logarithmic factor in the second term is due to the fact that in each of the logarithmically many episodes the algorithm may eliminate the best remaining arm, but
keep an arm that is at most $O(\epsapp \sqrt{d})$ worse than the best remaining arm.
\end{proof}

\newcommand{\algitem}[1]{\item[(#1)]}

\begin{algorithm}
\textsc{input} $\Phi \in \R^{k \times d}$ and confidence level $\alpha \in (0,1)$
\begin{enumerate}[leftmargin=0.6cm]
\algitem{1} Set $m = \ceil{4 d \log \log d} + 16$ and $\cA = \rows(\Phi)$
\algitem{2} Find design $\rho : \cA \to [0,1]$ with $g(\rho) \leq 2d$ and $|\Supp(\rho)| \leq 4d \log \log(d) + 16$
\algitem{3}  Compute $\displaystyle u(a) = \ceil{m\rho(a)}$ and $\displaystyle u = \sum_{a \in \cA} u(a)$
\algitem{4} Take each action $a \in \cA$ exactly $u(a)$ times with corresponding features $(X_s)_{s=1}^u$ and rewards $(Y_s)_{s=1}^u$
\algitem{5} Calculate the vector $\hat \theta$: 
\begin{align*}
\hat \theta = G^{-1} \sum_{s=1}^u X_s Y_s \quad \text{with} \quad G = \sum_{a \in \cA} u(a) aa^\top
\end{align*}
\algitem{6} Update active set: \\ \!\!\!$\displaystyle \cA \leftarrow \left\{a \in \cA : \max_{b \in \cA} \ip{\hat \theta, b - a} \leq 2\sqrt{\frac{4d}{m} \log\left(\frac{1}{\alpha}\right)}\right\}$.
\algitem{7} $m \leftarrow 2m$ and \textsc{goto} (1) 
\end{enumerate}
\caption{\textsc{phased elimination}}\label{alg:elim}
\end{algorithm}

\begin{remark}
When the active set contains fewer than $d$ actions, then the conditions of Kiefer-Wolfowitz are not satisfied because $\cA$ cannot span $\R^d$.
Rest assured, however, since in these cases one can simply work in the smaller space spanned by $\cA$ and the analysis goes through without further changes.
\end{remark}

\paragraph{Known approximation error}
The logarithmic factor in the second term in the regret bound can be removed when $\epsapp$ is known by modifying the elimination criteria so that with high probability the
optimal action is never eliminated, as explained in
\ifsup
\cref{rem:known}.
\else
the supplementary material.
\fi

\paragraph{Infinite action sets}
The logarithmic dependence on $k$ follows from the choice of $\alpha$, which is needed to guarantee the concentration holds for all actions.
When $k = \Omega(\exp(d))$, the union bound can be improved by a covering argument or using the argument in the next section. This leads to a bound
of $O(d \sqrt{n \log(n)} + \epsilon n \sqrt{d} \log(n))$, which is independent of the number of arms.

\paragraph{Other approaches}
We are not the first to consider misspecified linear bandits.
\citet{ghchgo07} consider the same setting and show that in the favourable case when one can cheaply test linearity,  there exist algorithms for which the regret has
order $\min(d, \sqrt{k}) \sqrt{n}$ up to logarithmic factors. While such results a certainly welcome, our focus is on the case where $k$ has the same order of magnitude as $n$ and hence
the dependence of the regret on $\epsilon$ is paramount.
Another way to obtain a similar result to ours is to use the Eluder dimension \citep{RR13}, which should first be generalised a little to accommodate the need to use an accuracy
threshhold that does not decrease with the horizon. Then the Eluder dimension can be controlled using either our techniques or the alternative argument by \citet{DV19}. 

\paragraph{Contextual linear bandits} 
Algorithms based on phased elimination are not easily adapted to the contextual case, which is usually addressed using optimistic methods.
You might wonder whether or not LinUCB \citep{AST11} serendipitously adapts to misspecified models in the contextual case.
\citet{GMZ16} have shown that LinUCB \textit{is} robust in the non-contextual case when $\epsilon$ is very small.
Their conditions, however, depend the structure of the problem, and in particular on having good control of the $2$-norm of $\Delta$, which may 
scale like $\Omega(\epsilon \sqrt{k})$ and is too big for large action sets.
We provide a negative result in the supplementary material, as well as a modification that corrects the algorithm, but requires knowledge of the approximation error.
The modification is a data-dependent refinement of the bonus used by \citet{JYW19}.
An open question is to find an algorithm for contextual linear bandits for which the regret similar to \cref{prop:linear} and where the algorithm does not
need to know the approximation error.

\section{Reinforcement learning}\label{sec:rl}
We now consider discounted reinforcement learning with a generative model, 
which means the learner can sample next-states and rewards for any state-action pair of their choice.
The notation is largely borrowed from \citep{Sze10}.
Fix an MDP with state space $[S]$, action space $[A]$, transition kernel $P$, reward function $r : [S] \times [A] \to [0,1]$ 
and discount factor $\gamma \in (0, 1)$. The finiteness of the state space is assumed only for simplicity.
As usual, $V^\pi$ and $Q^\pi$ refer to the value and action-value functions for policy $\pi$ (e.g., $V^\pi(s)$ is the total expected discounted reward incurred while following policy $\pi$ in the MDP) and $V^*$ and $Q^*$ the same for the optimal policy.
The learner is given a feature matrix $\Phi \in \R^{SA \times d}$ such that $Q^\pi \in \cH_\Phi^\epsapp$ for all policies $\pi$ and where $Q^\pi$ is vectorised
in the obvious way. The notation $\Phi(s, a) \in \R^d$ denotes the feature associated with state-action pair $(s, a)$.

The main idea is the observation that if $Q^*$ were known with reasonable accuracy on the support
of an approximately optimal design $\rho$ on the set of vectors $(\Phi(s, a) : s,a \in [S] \times [A])$, then least-squares in combination with our earlier arguments
would provide a good estimation of the optimal state-action value function.
Approximating $Q^*$ on the core set $\cC = \Supp(\rho) \subset [S] \times [A]$ is possible using approximate policy iteration.
For the remainder of this section let $\rho$ be a design with $g(\rho) \leq 2d$ and with support $\cC$ and
$G(\rho) = \sum_{(s, a) \in \cC} \rho(s, a) \Phi(s, a) \Phi(s,a)^\top$.

\paragraph{Related work}
The idea to extrapolate a value function by sampling from a few anchor state/action pairs has been used before in a few works. 
The recent work by \citet{ZLK19} consider approximate value iteration in the episodic
setting and do not make a connection to optimal design. 
The challenge in the finite-horizon setting is that one must learn one parameter vector for each layer and, at least naively, errors propagate multiplicatively. 
For this reason using the anchor pairs from the support of an experimental design would not make the algorithm proposed by the aforementioned paper practical.
\citet{YW19} assume the transition matrix has a linear structure and also use least-squares with data from a pre-selected collection of anchor state/action pairs.
Their assumption is that the features of all state-action pairs can be written as a convex combination of the anchoring features, which means the number of anchors
is the number of corners of the polytope spanned by $\rows(\Phi)$ and may be much larger than $d$. One special feature of their paper is that the dependency on the horizon of the sample
complexity is cubic in $1/(1 - \gamma)$, while in our theorem it is quartic.
Earlier, \citet{LaBhSze18} described how anchor states (with some lag allowed) can be used to reduce the number of constraints in the approximate linear programming approach to approximate planning in MDPs, while maintaining error bounds.

\paragraph{Approximate policy iteration}
Let $\pi_1$ be an arbitrary policy and define
a sequence of policies $(\pi_k)_{k=1}^\infty$ inductively using the following procedure.
From each state-action pair $(s, a) \in \cC$ take $m$ roll-outs of length $n$ following policy $\pi_k$ and let $\hat Q_k(s, a)$ 
be the empirical average, which is only defined on the core set $\cC$. 
The estimation of $Q^{\pi_k}$ is then extended to all state-action pairs using the features and least-squares
\begin{align*}
\hat \theta_k = G(\rho)^{-1} \sum_{(s, a) \in \cC} \rho(s, a) \Phi(s, a) \hat Q_k(s, a) \quad 
Q_k = \Phi \hat \theta_k\,.
\end{align*}
Then $\pi_{k+1}$ is chosen to be the greedy policy with respect to $Q_k$ and the process is repeated.
The following theorem shows that for suitable choices of roll-out length $n$, roll-out number $m$ and iterations $k$, the policy 
$\pi_{k+1}$ is nearly optimal with high probability. Significantly, the choice of parameters ensures that the total number of samples
from the generative model is independent of $S$ and $A$.

\begin{theorem}\label{thm:mdp}
Suppose that approximate policy iteration is run with
\begin{align*}
k = \frac{\log\left(\frac{1}{\epsapp \sqrt{d}}\right)}{1 - \gamma}  \quad
m = \frac{\log\left(\frac{2k|\cC|}{\alpha}\right)}{2\epsapp^2(1 - \gamma)^2} \quad
n = \frac{\log\left(\frac{1}{\epsapp(1-\gamma)}\right)}{1 - \gamma}\,.
\end{align*}
Then there exists a universal constant $C$ such that with probability at least $1 - \alpha$, the policy $\pi_{k+1}$ satisfies
\begin{align*}
\max_{s \in [S]} \left(V^*(s) - V^{\pi_{k+1}}(s)\right) \leq C\epsapp \sqrt{d} / (1 - \gamma)^2\,,
\end{align*}
\end{theorem}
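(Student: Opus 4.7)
The plan is to combine a per-iteration accuracy bound on $Q_k$ with the standard approximate policy iteration (API) error-propagation machinery. Specifically, I would first show that, with probability at least $1-\alpha$, every iterate satisfies $\|Q_k - Q^{\pi_k}\|_\infty \leq O(\epsapp \sqrt{d})$, and then feed this per-iteration guarantee into the classical result that converts policy-evaluation errors into a suboptimality bound on the greedy policy $\pi_{k+1}$.

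For the per-iteration analysis I would decompose the error on the core set $\cC$ into three sources. First, truncating each rollout at length $n$ biases the mean of $\hat Q_k(s,a)$ from $Q^{\pi_k}(s,a)$ by at most $\gamma^n/(1-\gamma)$; the stated $n$ forces this to be $\leq \epsapp$. Second, since each truncated return lies in $[0, 1/(1-\gamma)]$, Hoeffding's inequality together with a union bound over the $k$ iterations and the $|\cC|$ core-set pairs shows that with probability at least $1-\alpha$ every empirical average $\hat Q_k(s,a)$ is within $\epsapp$ of its mean, and the choice of $m$ is calibrated for exactly this. Combining the two yields $\max_{(s,a)\in\cC} |\hat Q_k(s,a) - Q^{\pi_k}(s,a)| \leq 2\epsapp$ on the high-probability event. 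Third, since $Q^{\pi_k} \in \cH_\Phi^\epsapp$ by assumption and $\hat Q_k$ equals $Q^{\pi_k}$ plus a perturbation of infinity-norm $\leq 2\epsapp$ on $\cC$, Proposition~\ref{prop:inf} applied with $\beta = 2\epsapp$ gives $\|Q_k - Q^{\pi_k}\|_\infty \leq \epsapp + 3\epsapp\sqrt{2d} = O(\epsapp\sqrt{d})$. This is precisely where the $\sqrt{d}$ inflation enters, and it is paid once per iteration rather than compounded.

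With the per-iteration bound in hand, I would invoke the standard API propagation lemma: if $\|Q_j - Q^{\pi_j}\|_\infty \leq \epsapp'$ for all $j \leq k$, then
\begin{align*}
\|V^* - V^{\pi_{k+1}}\|_\infty \leq \frac{2\gamma \epsapp'}{(1-\gamma)^2} + \frac{2\gamma^k}{1-\gamma}\|V^* - V^{\pi_1}\|_\infty.
\end{align*}
Substituting $\epsapp' = O(\epsapp\sqrt{d})$ yields the leading term $O(\epsapp\sqrt{d}/(1-\gamma)^2)$. For the transient term, the choice $k = \log(1/(\epsapp\sqrt{d}))/(1-\gamma)$ gives $\gamma^k \leq e^{-k(1-\gamma)} = \epsapp\sqrt{d}$, and since $\|V^* - V^{\pi_1}\|_\infty \leq 1/(1-\gamma)$, the contribution is $O(\epsapp\sqrt{d}/(1-\gamma))$, which is subsumed by the first term. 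Collecting the two pieces gives the claimed bound.

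The main subtlety is calibrating $k$, $m$ and $n$ so that every individual error source on the core set stays below the misspecification scale $\epsapp$ even after the union bound; otherwise the noise term $\beta$ in Proposition~\ref{prop:inf} dominates the approximation term and the $\sqrt{d}$ factor multiplies something larger than $\epsapp$. A secondary care-point is insisting on infinity-norm control throughout: the extrapolation step in Proposition~\ref{prop:inf} is an infinity-norm statement and does not cleanly pair with an $\ell_2$-style API bound, so the entire error chain must be carried in $\|\cdot\|_\infty$.
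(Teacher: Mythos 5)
Your proposal is correct and follows essentially the same route as the paper: the same three-way per-iteration error decomposition (truncation bias, Hoeffding with a union bound over iterations and core-set pairs, then Proposition~\ref{prop:inf} with $\beta = 2\epsilon$), followed by error propagation through approximate policy iteration. The only cosmetic difference is that you invoke a single composite API propagation bound on $\|V^* - V^{\pi_{k+1}}\|_\infty$, whereas the paper reaches the same conclusion by chaining Lemma~\ref{lem:mdp1}, a triangle inequality on $\|Q_k - Q^*\|_\infty$, and the Singh--Yee greedy-policy lemma (Lemma~\ref{lem:mdp2}).
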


When $\rho$ is chosen using \cref{thm:todd} so that $|\cC| \leq 4 d \log \log(d) + 16$, then
the number of samples from the generative model is $kmn|\cC|$, which is
\begin{align*}
O\left(\frac{\log\left(\frac{1}{\epsapp(1 - \gamma)}\right) \log\left(\frac{2k|\cC|}{\alpha}\right)
\log\left(\frac{1}{\epsapp \sqrt{d}}\right) d \log \log(d)}{\epsapp^2 (1 - \gamma)^4}\right) \,.
\end{align*}
Before the proof we need two lemmas.
The first controls the propagation of errors in policy iteration when using $Q_k$ rather than $Q^{\pi_k}$.
For policy $\pi$, let $P^\pi : \R^{[S] \times [A]} \to \R^{[S] \times [A]}$ defined by $(P^\pi Q)(s,a)=\sum_{s'} P(s'|s,a) Q(s',\pi(s'))$.

\begin{lemma}\label{lem:mdp1}
Let $\epssub_i = Q_i - Q^{\pi_i}$ and $E_i = P^{\pi_{i+1}}(I-\gamma P^{\pi_{i+1}})^{-1} (I-\gamma P^{\pi_i})-P^{\pi^*}$.
Then,
$ 
Q^* - Q^{\pi_k} 
\le
(\gamma P^{\pi^*})^k (Q^* - Q^{\pi_0}) 
+ \gamma \sum_{i=0}^{k-1} (\gamma P^{\pi^*})^{k-i-1} E_i  \epssub_i 
$.
\end{lemma}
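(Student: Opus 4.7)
The approach is to prove the one-step recursion
\[Q^* - Q^{\pi_{i+1}} \le \gamma P^{\pi^*}(Q^* - Q^{\pi_i}) + \gamma E_i \epssub_i\]
and then iterate it $k$ times. Iterating telescopes the left-hand side to $Q^* - Q^{\pi_k}$ and produces $(\gamma P^{\pi^*})^k (Q^* - Q^{\pi_0})$ from the initial gap together with the claimed sum $\gamma \sum_{i=0}^{k-1} (\gamma P^{\pi^*})^{k-i-1} E_i \epssub_i$. So the content of the lemma reduces to the one-step recursion.

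To derive it, subtract the Bellman equations $Q^* = r + \gamma P^{\pi^*} Q^*$ and $Q^{\pi_{i+1}} = r + \gamma P^{\pi_{i+1}} Q^{\pi_{i+1}}$ and insert $\pm\gamma P^{\pi^*} Q_i$ to get
\begin{align*}
Q^* - Q^{\pi_{i+1}} &= \gamma P^{\pi^*}(Q^* - Q_i) + \gamma(P^{\pi^*} Q_i - P^{\pi_{i+1}} Q^{\pi_{i+1}}) \\
&\le \gamma P^{\pi^*}(Q^* - Q^{\pi_i}) - \gamma P^{\pi^*}\epssub_i + \gamma P^{\pi_{i+1}}(Q_i - Q^{\pi_{i+1}}),
\end{align*}
where the inequality uses that $\pi_{i+1}$ is greedy with respect to $Q_i$, so $P^{\pi^*} Q_i \le P^{\pi_{i+1}} Q_i$ entrywise.

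The key algebraic step is to re-express $Q_i - Q^{\pi_{i+1}}$ in terms of $\epssub_i$. Applying $I - \gamma P^{\pi_{i+1}}$ and using $r = (I - \gamma P^{\pi_i}) Q^{\pi_i}$ gives
\[(I - \gamma P^{\pi_{i+1}})(Q_i - Q^{\pi_{i+1}}) = (I - \gamma P^{\pi_i})\epssub_i + \gamma(P^{\pi_i} - P^{\pi_{i+1}}) Q_i.\]
The second summand is entrywise $\le 0$ by the same greedy property. Since $(I - \gamma P^{\pi_{i+1}})^{-1} = \sum_{t \ge 0}(\gamma P^{\pi_{i+1}})^t$ and $P^{\pi_{i+1}}$ are both entrywise-monotone, dropping this nonpositive correction yields
\[\gamma P^{\pi_{i+1}}(Q_i - Q^{\pi_{i+1}}) \le \gamma P^{\pi_{i+1}}(I - \gamma P^{\pi_{i+1}})^{-1}(I - \gamma P^{\pi_i})\epssub_i.\]
Plugging this into the previous display and combining with the $-\gamma P^{\pi^*}\epssub_i$ term reconstitutes exactly $\gamma E_i \epssub_i$, giving the one-step recursion.

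The main obstacle is bookkeeping of signs and directions of inequality: the greedy property of $\pi_{i+1}$ is available for $Q_i$ but not for $Q^{\pi_i}$, so one must insert the estimate $Q_i$ in just the right place, and discarding the $\gamma(P^{\pi_i} - P^{\pi_{i+1}}) Q_i$ term depends on both the resolvent $(I - \gamma P^{\pi_{i+1}})^{-1}$ and the left-multiplier $P^{\pi_{i+1}}$ being nonnegative operators. Once the one-step recursion is in place, the iteration is standard induction and produces the stated bound.
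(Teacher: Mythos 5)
Your proof is correct. The paper does not actually prove this lemma itself --- its ``proof'' is a citation to Eq.~(7) in the proof of Theorem~3(b) of Farahmand--Munos--Szepesv\'ari (2010), which traces back to Lemma~4 of Munos (2003) --- and your derivation is a faithful, self-contained reconstruction of exactly that standard error-propagation argument: subtract the two fixed-point equations, use greediness of $\pi_{i+1}$ with respect to $Q_i$ twice (once to replace $P^{\pi^*}Q_i$ by $P^{\pi_{i+1}}Q_i$, once to discard $\gamma(P^{\pi_i}-P^{\pi_{i+1}})Q_i$), and exploit that $P^{\pi_{i+1}}$ and $(I-\gamma P^{\pi_{i+1}})^{-1}=\sum_{t\ge 0}(\gamma P^{\pi_{i+1}})^t$ are entrywise-nonnegative, hence monotone. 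The only detail worth stating explicitly is that the unrolling step also uses monotonicity: applying $\gamma P^{\pi^*}$ to both sides of the one-step inequality preserves it because $P^{\pi^*}$ is a nonnegative operator; with that remark the induction is complete.
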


\begin{proof}
This is stated as Eq. (7) in the proof of part~(b) of Theorem~3 of \cite{FaMuSz10}
and ultimately follows from Lemma~4 of \citet{Munos03}.
\end{proof}

The second lemma controls the value of the greedy policy with respect to a $Q$ function in terms of the quality of the $Q$ function.

\begin{lemma}[\citet{SinghYee94}, corollary 2]\label{lem:mdp2}
Let $\pi$ be greedy with respect to an action-value function $Q$.
Then for any state $s\in [S]$, $V^\pi(s)\ge V^*(s) - \frac{2}{1-\gamma} \norm{Q-Q^*}_\infty$.
\end{lemma}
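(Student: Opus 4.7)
The plan is to bound $M = \sup_{s\in[S]} (V^*(s) - V^\pi(s))$ by $2\epsilon/(1-\gamma)$, where $\epsilon = \norm{Q - Q^*}_\infty$. I will use a standard telescoping/decomposition trick together with the greedy property of $\pi$ and a one-step unfolding of the Bellman equation to obtain a self-referential inequality.

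For a fixed state $s$, write $V^*(s) = Q^*(s, \pi^*(s))$ and $V^\pi(s) = Q^\pi(s, \pi(s))$, and insert two intermediate quantities:
\begin{align*}
V^*(s) - V^\pi(s)
&= \bigl[Q^*(s,\pi^*(s)) - Q(s,\pi^*(s))\bigr] \\
&\quad + \bigl[Q(s,\pi^*(s)) - Q(s,\pi(s))\bigr] \\
&\quad + \bigl[Q(s,\pi(s)) - Q^\pi(s,\pi(s))\bigr].
\end{align*}
The first bracket is at most $\epsilon$ by the definition of $\epsilon$. The second bracket is nonpositive because $\pi(s) \in \argmax_a Q(s,a)$ by the greedy hypothesis, so $Q(s,\pi(s)) \geq Q(s,\pi^*(s))$. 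For the third bracket, I will insert $Q^*(s,\pi(s))$ and use the triangle-type bound to get at most $\epsilon + (Q^*(s,\pi(s)) - Q^\pi(s,\pi(s)))$.

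The next step is to use the Bellman equations $Q^*(s,a) = r(s,a) + \gamma \sum_{s'} P(s'\mid s,a) V^*(s')$ and $Q^\pi(s,a) = r(s,a) + \gamma \sum_{s'} P(s'\mid s,a) V^\pi(s')$, which cancel the reward and give
\begin{align*}
Q^*(s,\pi(s)) - Q^\pi(s,\pi(s)) = \gamma \sum_{s'} P(s'\mid s,\pi(s))\bigl[V^*(s') - V^\pi(s')\bigr] \leq \gamma M.
\end{align*}
Combining the three pieces yields $V^*(s) - V^\pi(s) \leq 2\epsilon + \gamma M$ for every $s$, and taking a supremum over $s$ on the left gives $M \leq 2\epsilon + \gamma M$, which solves to $M \leq 2\epsilon/(1-\gamma)$, exactly the claim.

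No step is really an obstacle; the only mild subtlety is justifying that $M$ is finite so that the rearrangement $M(1-\gamma) \leq 2\epsilon$ is valid. This is immediate from $r \in [0,1]$, which implies $V^*, V^\pi \in [0, 1/(1-\gamma)]$ and hence $M \leq 1/(1-\gamma) < \infty$.
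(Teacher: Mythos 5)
Your proof is correct and complete: the three-term decomposition, the use of the greedy property to kill the middle term, the one-step Bellman unfolding to get the $\gamma M$ self-reference, and the finiteness check before rearranging are all sound, and the constants come out exactly as claimed. The paper itself offers no proof of this lemma---it is imported verbatim as Corollary 2 of \citet{SinghYee94}---and your argument is essentially the standard one from that reference, so there is nothing to reconcile; you have simply made the cited result self-contained.
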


\begin{proof}[Proof of \cref{thm:mdp}]
Hoeffding's bound and the definition of the roll-out length shows that for any $(s, a) \in \cC$, with probability at least $1 - \alpha$,
\begin{align*}
\left|\hat Q_i(s, a) - Q^{\pi_i}(s,a)\right| \leq \frac{1}{1 - \gamma}\sqrt{\frac{1}{2m} \log\left(\frac{2}{\alpha}\right)} + \epsapp  = 2\epsapp\,.
\end{align*}
At the end we analyse the failure probability of the algorithm, but for now assume the above inequality holds for all $i \leq k$ and $(s, a) \in \cC$.
Let $\theta_i = \argmin_\theta \norm{Q^{\pi_i} - \Phi \theta}_{\infty}$. Then, by \cref{prop:inf} with $\beta = 2\epsilon$,
\begin{align*}
\norm{Q_i - Q^{\pi_i}}_\infty 
= \norm{\Phi \hat \theta_i - Q^{\pi_i}}_\infty
\leq 3 \epsapp \sqrt{2d} + \epsapp 
\doteq \epssub \,.
\end{align*}
Since the rewards belong to the unit interval,
taking the maximum norm of both sides in \cref{lem:mdp1} shows that
$\norm{Q^* - Q^{\pi_k}}_\infty \leq 2\epssub / (1-\gamma) + \gamma^k / (1-\gamma)$.
Then, by the triangle inequality,
\begin{align*}
\norm{Q_k-Q^*}_\infty
&\leq \norm{Q_k-Q^{\pi_k}}_\infty + \norm{Q^*-Q^{\pi_k}}_\infty \\
&\leq \frac{3\epssub}{1-\gamma} + \frac{\gamma^k}{1-\gamma}\,.
\end{align*}
Next, by \cref{lem:mdp2}, for any state $s \in [S]$, 
\begin{align*}
V^{\pi_{k+1}}(s)
& \geq V^*(s) - \frac{2}{1-\gamma} \norm{Q_k-Q^*}_\infty \\
& \geq V^*(s) - \frac{2}{(1-\gamma)^2} \left( 3\epssub + \gamma^k\right)\,. 
\end{align*}
All that remains is bounding the failure probability, which follows immediately from a union bound over all iterations $i \leq k$ and state-action pairs $(s, a) \in \cC$.
\end{proof}

\section{Conclusions}
Are good representations sufficient for efficient learning in bandits or in RL with a generative model? 
The answer depends on whether one accepts a blowup of the approximation error by a factor of $\sqrt{d}$, and 
is positive if and only if this blowup is acceptable.
The implication is that the role of bias/prior information is more pronounced than in supervised learning where the blowup does not appear.
One may wonder whether the usual changes to the learning problem, such as considering sparse approximations, could reduce the blowup. 
Since sparsity is of little help even in the realisable setting \citep[chapter~23]{LaSz18:book}, we are only modestly optimistic in this regard.
Note also that in reinforcement learning, the blowup is even harsher: in the discounted case we see that a factor of $1/(1-\gamma)^2$ also appears, which 
we believe is not improvable.

The analysis in both the bandit and reinforcement learning settings can be decoupled into two components.
The first is to control the query complexity of identifying a near-optimal action and the second is estimating the value of an action/policy using roll-outs.
This view may be prove fruitful when analysing (more) non-linear classes of reward function.

There are many open questions.
First, in order to compute an approximate optimal design, the algorithm needs to examine all features.
Second, the argument in \cref{sec:rl} 
heavily relies on the uniform contraction property of the various operators involved. It remains to be seen whether similar arguments hold for other settings, 
such as the finite horizon setting or the average 
cost setting.
Another interesting open question is whether a similar result holds for the online setting when the learner needs to control its regret.

\bibliography{all}

\appendix

\ifsup

\section{Proof of \cref{lem:obv}}
\label{sec:ellb}
Recall that \cref{lem:obv} states that 
\begin{align*}
c^{\max}_1( \{ e_1, \dots, e_k \} ) = (k+1)/2\,.
\end{align*}
For the upper bound consider, an algorithm that queries each coordinate in a random order, stopping as soon as it receives a nonzero reward, at which point the algorithm can return $\hat \mu = \mu$ and $\hat a = \argmax \mu$. 
Clearly, this algorithm is sound. Let $S\in \mathrm{Perm}([k])$ be a (uniform) random permutation of $[k]$, which represents the order of the algorithms queries.
Let $T_i = \min\{ t\ge 1\,:\, S_t = i \}$ be the number of queries if $r=e_i$. Note that $(T_i)_i$ is a random permutation of $[k]$.
Note also that by symmetry $\EE{T_1}=\dots = \EE{T_i}$.
Hence, $\max_i \EE{T_i} = \frac1k \EE{\sum_i T_i} = (k+1)/2$.

The proof of the lower bound is based on a similar calculation.
First, note that by Yao's principle it suffices to show that there exist a distribution $P$ over the problem instances such that any \emph{deterministic} algorithm needs to ask at least $(k+1)/2$ queries on expectation when the instance that the algorithm runs on is chosen randomly from the distribution. Let $P$ be the uniform distribution. Note that any deterministic algorithm $\cA$ can be identified with a fixed sequence $s_1,s_2, \dots\in [k]$ of queries. 
Call $\cA$ dominated if some other algorithm $\cA'$ achieves better query complexity on any input instance while the query complexity of $\cA'$ is never worse than that of $\cA$ on any other instance.
Clearly, it suffices to consider nondominated algorithms.
Hence, $s_1,s_2,\dots,s_k$ cannot have repeated elements,
i.e., $(s_1,s_2,\dots,s_k)$ is a permutation of $[k]$.
Further, $\cA$ must stop as soon as it receives a nonzero answer or it would be dominated.
This implies that
 the number of queries issued by $\cA$ when run on $e_i$
 is $t(i) = \min\{ t\ge 1\,:\, s_t = i \}$. (Since $\cA$ is sound and $\epssub \le 1$, $\cA$ cannot stop before time $t(i)$ when running on $e_i$ and if stopped later, it would be dominated by the algorithm that stops at time $t(i)$). 
Since $(s_i)_i$ is a permutation of $[k]$, so is $(t(i))_{i\in [k]}$.
Then, the expected number of queries issued by $\cA$ is 
$\EE{t(I)} = \frac1k \sum_i t(i) = \frac1k \sum_i i = (k+1)/2$.

\section{Proof of Corollary~\ref{cor:badmx}}\label{app:cor:badmx}
By the proof of \cref{lem:obv}, it should be clear that
if $\delta e_1,\ldots, \delta e_k \in \cH$, then $c_\delta^{\max}(\cH) \geq (k + 1)/2$.
Let
\begin{align*}
k = \floor{\exp\left(\frac{d-1}{8} \left(\frac{\epsilon}{\delta}\right)^2\right)}\,.
\end{align*}
Then, by \cref{lem:jl}, there exists a feature matrix $\Phi' \in \R^{k \times d}$ such that
for all $a \neq b \in \rows(\Phi')$, $\norm{a} = 1$ and $\ip{a, b} \leq \epsilon / \delta$.
Let $\rows(\Phi') = \{a_1,\dots,a_k\}$.
Then, $\norm{\delta e_i-\Phi' (\delta a_i)} \le \epsilon$
and hence $\delta e_i \in \cH^\epsilon_\Phi$ for all $i \in [k]$.
Thus, 
$c_\delta^{\max}(\cH^\epsilon_\Phi) \geq (k+1)/2$.
The result follows from the definition of $k$.

\section{Proof of Proposition~\ref{prop:bounds}}\label{app:bounds}

\begin{algorithm}
\textsc{input:}\,\, $\Phi \in \R^{k \times d}$ and $q \in [k]$ and $\epsilon \geq 0$
\begin{enumerate}[leftmargin=0.7cm]
\algitem{1}
Find $C \subset [k]$ with $|C| = q$ minimising
\begin{align*}
\max\left\{\norm{\Phi u}_\infty : u \in \R^d, \norm{\Phi_C u} \leq 1\right\}
\end{align*}
\algitem{2} Probe $\mu$ on $C$
\algitem{3} Find $\hat \theta \in \R^d$ such that $\norm{\Phi_C \hat \theta - \mu_C}_\infty \leq \epsilon$
\algitem{4} Return $\hat \mu = \Phi \hat \theta$
\end{enumerate}
\caption{Optimal estimation algorithm}
\label{alg:opt}
\end{algorithm}

\paragraph{Upper bound}
The upper bound is realised by Algorithm~\ref{alg:opt}.
Since $\mu \in \cH^\epsilon_\Phi$, there exists a $\theta \in \R^d$ and $\Delta \in B_\infty(\epsilon)$ such that $\mu = \Phi \theta + \Delta$.
By the definition of the algorithm, $\norm{\hat \mu_C - \mu_C}_\infty \leq \epsilon$, which implies that
\begin{align*}
\norm{\Phi_C(\theta - \hat \theta)}_\infty
&= \norm{\mu_C - \Delta_C - \hat \mu_C}_\infty  \\
&\leq \epsilon + \norm{\mu_C - \hat \mu_C}_\infty \\
&\leq 2\epsilon\,.
\end{align*}
Next, using the definition of $\lambda_q$,
\begin{align*}
\norm{\mu - \hat \mu}_\infty 
&= \norm{\Phi(\theta - \hat \theta) + \Delta}_\infty \\
&\leq \norm{\Phi(\theta - \hat \theta)}_\infty + \epsilon \\
&\leq \norm{\Phi_C(\theta - \hat \theta)}_\infty \max_{v \in \R^d \setminus \{\zeros\}} \frac{\norm{\Phi v}_\infty}{\norm{\Phi_C v}_\infty} + \epsilon \\
&\leq 2\epsilon \lambda_q(\Phi) + \epsilon \\
&< \delta_2\,.
\end{align*}

\paragraph{Lower bound}
Suppose an algorithm is sound with respect to $(\cH^\epsilon_\Phi, \delta_1)$. It suffices to show that there exists a $\mu \in \cH^\epsilon_\Phi$ such that
whenever the algorithm halts with non-zero probability, it has made more than $q$ queries.
Let $\mu = \zeros$ and suppose the algorithm halts having queried $\mu$ on $C \subset [k]$ with non-zero probability and $|C| \leq q$.
Let $\cR = \{\mu \in \cH^\epsilon_\Phi : \mu_C = \zeros\}$ be the set of plausible rewards consistent with the observation. 
By the assumption that the algorithm is sound with respect to $(\cH^\epsilon_\Phi, \delta_1)$, it must be that
\begin{align*}
2 \max_{\nu \in \cR} \norm{\nu}_\infty \leq \max_{\nu, \xi \in \cR} \norm{\nu - \xi}_\infty \leq 2 \delta_1\,,
\end{align*}
where the first inequality is true since for all $\nu \in \cR$ we have $-\nu \in \cR$.
Then,
\begin{align*}
&\max_{\nu \in \cR} \norm{\nu}_\infty \\
&= \max\left\{\norm{\Phi \theta + \Delta}_\infty \!:\! \norm{\Phi_C \theta}_\infty \leq \epsilon, \theta \in \R^d, \Delta \in B_\infty(\epsilon)\right\} \\
&= \epsilon + \max\left\{\norm{\Phi \theta}_\infty \!:\! \norm{\Phi_C \theta}_\infty \leq \epsilon, \theta \in \R^d\right\} \\
&\geq \epsilon + \epsilon \lambda_q(\Phi) \\
&\geq \delta_1\,,
\end{align*}
which contradicts soundness and hence $|C| > q$.

\newcommand{\bbP}{\mathbb P}
\section{Details for proof of \cref{prop:linear}}\label{app:bandit}
The proof is completed in two steps. First we summarise what has already been established about the within-episode behaviour of the algorithm. Then, in the second step,
the regret is summed over the episodes.

\paragraph{In-episode behaviour}
Let $\sF$ be the $\sigma$-algebra generated by the history up to the start of a given episode and $\hat \mu$ be the least-squares estimate of $\mu$ computed
by the algorithm in that episode. Similarly, let $m$, $u$ and $\cA$ be the quantities defined in the given episode of the algorithm.
Let $a^* = \argmax_{a \in \cA} \mu_a$ and $\hat a = \argmax_{a \in \cA} \hat \mu_a$.
Now, for $b \in \cA$, let $\cE_b$ be the event 
\begin{align*}
\cE_b = \left\{\left|\ip{b, \hat \theta - \theta}\right| \leq 2\epsilon \sqrt{d} + \sqrt{\frac{4d}{m} \log\left(\frac{1}{\alpha}\right)}\right\}\,.
\end{align*}
We have shown that $\bbP(\cE_b \mid \sF) \geq 1 - \alpha$,
which by a union bound implies that
\begin{align*}
\bbP(\cup_{b \in \cA} \cE_b \mid \sF) \geq 1 - k \alpha\,.
\end{align*}
By the definition of the algorithm, an action $a \in \cA$ is not eliminated at the end of the episode if
\begin{align*}
 \ip{\hat \theta, \hat a - a} \leq 2 \sqrt{\frac{4d}{m} \log\left(\frac{1}{\delta}\right)}\,,
\end{align*}
which implies that
\begin{align*}
&2 \sqrt{\frac{4d}{m} \log\left(\frac{1}{\delta}\right)} 
\geq \ip{\hat \theta, \hat a - a} \\
&\geq \ip{\theta, a^* - a} - 2 \sqrt{\frac{4d}{m} \log\left(\frac{1}{\alpha}\right)} - 4\epsilon \sqrt{d}\,.
\end{align*}
Hence, if $a \in \cA$ is not eliminated, then
\begin{align*}
\mu_a 
&\geq \ip{a, \theta} - \epsilon \\
&\geq \ip{a^*, \theta} - \epsilon - 4\sqrt{\frac{4d}{m} \log\left(\frac{1}{\alpha}\right)} - 4\epsilon \sqrt{d} \\
&\geq \mu_{a^*} - \epsilon - 4\sqrt{\frac{4d}{m} \log\left(\frac{1}{\alpha}\right)} - 4\epsilon \sqrt{d}\,.
\end{align*}
Because the condition for eliminating arms does not depend on $\epsilon$, it is not possible to prove that $a^*$ is not eliminated with high probability.
What we can show is that at least one near-optimal action is retained. Suppose that $a^*$ is eliminated, then, using the definition of the algorithm,
\begin{align*}
&2\sqrt{\frac{4d}{m} \log\left(\frac{1}{\alpha}\right)} 
<\ip{\hat \theta, \hat a - a^*} \\
&\leq \ip{\theta, \hat a - a^*} + 2\sqrt{\frac{4d}{m} \log\left(\frac{1}{\alpha}\right)} + 4 \epsilon \sqrt{d}\,.
\end{align*}
Rearranging shows that
\begin{align}
\mu_{\hat a} 
&\geq \ip{\theta, \hat a} - \epsilon \nonumber \\
&> \ip{\theta, a^*} - \epsilon - 4\epsilon \sqrt{d} \nonumber \\
&\geq \mu_{a^*} - 2\epsilon(1 + 2\sqrt{d})\,. \label{eq:keep-good}
\end{align}
Of course, $\hat a$ is not eliminated, which means that either $a^*$ is retained, or an action with nearly the same reward is.
What we have shown is that arms are eliminated if they are much worse than $a^*$ and that some arm with mean close to $a^*$ is retained.
We now combine these results.

\paragraph{Combining the episodes}
Let $L$ be the number of episodes and $\delta_\ell$ be the suboptimality of the best arm in the active set at the start of episode $\ell$.
By the previous part, with probability at least $1 - k \alpha L$, the good events occur in all episodes. Suppose for a moment that this happens. Then, by \cref{eq:keep-good},
\begin{align*}
\delta_\ell \leq 2 \epsilon (\ell - 1) (1 + 2\sqrt{d})\,.
\end{align*}
Then, letting $m_\ell = 2^{\ell-1} \ceil{4d \log \log(d) + 16}$,
the regret is bounded by
\begin{align*}
&R_n = \sum_{\ell=2}^L \sum_{a \in \cA_\ell} u_\ell(a) (\mu^* - \mu_a) \\
&\leq m_1 + \sum_{\ell=2}^L m_\ell \left[\delta_{\ell} + 2\sqrt{\frac{4d}{m_{\ell-1}} \log\left(\frac{1}{\alpha}\right)} + 4\epsilon \sqrt{d}\right] \\
&\leq C\left[\sqrt{d m_L \log\left(\frac{1}{\alpha}\right)} + \epsilon m_L \log(m_L) \sqrt{d}\right] \\
&\leq C\left[\sqrt{d n \log\left(\frac{1}{\alpha}\right)} + \epsilon n \sqrt{d} \log(n)\right]\,.
\end{align*}
The result follows because the regret due to failing confidence intervals is
at most $\alpha k n L \leq L \leq \log_2(n)$, which is negligible relative to the above term.

\begin{remark}\label{rem:known}
When $\epsilon$ is known, the elimination condition can be changed to
\begin{align*}
\cA \leftarrow \left\{a \in \cA : \max_{b \in \cA} \frac{\ip{\hat \theta, b - a}}{4} \leq \sqrt{\frac{d}{m} \log\left(\frac{1}{\delta}\right)} + \epsilon \sqrt{d}\right\}\,.
\end{align*}
Repeating the analysis now shows that the optimal action is never eliminated with high probability, which eliminates the logarithmic dependence in the second
term of \cref{prop:linear}.
\end{remark}

\section{Linear contextual bandits}

In the contextual version of the misspecified linear bandit problem, the feature matrix changes from round to round.
Let $(k_t)_{t=1}^n$ be a sequence of natural numbers.
At the start of round $t$ the learner observes a matrix $\Phi_t \in \R^{k_t \times d}$, chooses an action $X_t \in \rows(\Phi_t)$ and receives
a reward $Y_t = \ip{X_t, \theta} + \eta_t + \Delta(X_t)$ where $\Delta : \R^d \to \R$ satisfies $\norm{\Delta}_\infty \leq \epsilon$.
Elimination algorithms are not suitable for such problems.
Here we show that if $\epsilon$ is known, then a simple modification of LinUCB \citep{AST11} can be effective.
You might wonder whether or not this algorithm works well without modification. The answer is sadly negative.

Let $G_t = I + \sum_{s=1}^t X_s X_s^\top$ and define the regurlarised least-squares estimator based on data from the first $t$ rounds by
\begin{align*}
\hat \theta_t = G_t^{-1} \sum_{s=1}^t X_s Y_s\,. 
\end{align*}
Assume for all $a \in \cup_{t=1}^n \rows(\Phi_t)$ that $\norm{a}_2 \leq 1$ and $|\ip{a, \theta}| \leq 1$.
The standard version of LinUCB chooses
\begin{align}
&X_{t+1} = \argmax_{a \in \rows(\Phi_{t+1})} \ip{a, \hat \theta_t} + \norm{a}_{G_t^{-1}} \beta_t\,, \\
&\quad \text{ with } \beta_t = 1 + \sqrt{2 \log\left(n\right) + d \log\left(1 + \frac{n}{d}\right)}\,.
\label{eq:oful}
\end{align}
The modification chooses
\begin{align}
X_{t+1} = \argmax_{a \in \rows(\Phi_{t+1})} \ip{a, \hat \theta_t} + \norm{a}_{G_t^{-1}} \beta_t + \epsilon \sum_{s=1}^t |a^\top G_t^{-1} X_s|\,. 
\label{eq:oful-adapt}
\end{align}
This modification is reminiscent of the algorithm by \citet{JYW19}, who use a bonus of $\Omega(t^{1/2})$ when using upper confidence bounds for least-squares estimators for linear
dynamics in reinforcement learning.

\begin{theorem}
The regret of the algorithm defined by \cref{eq:oful-adapt} satisfies
\begin{align*}
R_n = O\left(d \sqrt{n} \log(n) + n \epsilon \sqrt{d \log(n)}\right)\,.
\end{align*}
\end{theorem}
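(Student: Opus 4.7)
The plan is to adapt the standard optimism-based analysis of LinUCB, with the observation that the extra bonus in \eqref{eq:oful-adapt} is precisely calibrated to absorb the bias introduced in $\hat\theta_t$ by the misspecification $\Delta$. First I would decompose the estimation error. Using $G_t = I + \sum_{s=1}^t X_s X_s^\top$ and $Y_s = \ip{X_s, \theta} + \eta_s + \Delta(X_s)$,
\begin{align*}
\hat\theta_t - \theta = -G_t^{-1}\theta + G_t^{-1}\sum_{s=1}^t X_s \eta_s + G_t^{-1}\sum_{s=1}^t X_s \Delta(X_s)\,.
\end{align*}
Taking the inner product with an action $a$, Cauchy--Schwarz handles the regularisation term (using $\norm{\theta}_2 \leq 1$), the self-normalised concentration bound (e.g., Theorem~20.4 of \citealt{LaSz18:book}) controls the noise term at level $\norm{a}_{G_t^{-1}}\beta_t$, and $|\Delta(X_s)|\le \epsilon$ bounds the last term by $\epsilon \sum_{s \le t}|\ip{a, G_t^{-1}X_s}|$. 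A union bound over rounds then gives, with high probability,
\begin{align*}
|\ip{a, \hat\theta_t - \theta}| \leq \norm{a}_{G_t^{-1}}\beta_t + \epsilon \sum_{s=1}^t |\ip{a, G_t^{-1} X_s}|\,,
\end{align*}
so the index in \eqref{eq:oful-adapt} is a valid upper confidence bound on $\ip{a, \theta}$.

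Optimism then gives the usual bound on the instantaneous regret: since $X_t$ maximises the index and the mean rewards differ from $\ip{\cdot, \theta}$ by at most $\epsilon$ on either side,
\begin{align*}
r_t \leq 2\epsilon + 2\norm{X_t}_{G_{t-1}^{-1}}\beta_{t-1} + 2\epsilon \sum_{s<t} |\ip{X_t, G_{t-1}^{-1} X_s}|\,.
\end{align*}
Summed over $t$, the first term contributes $2n\epsilon$, and the second is classical: Cauchy--Schwarz and the elliptical potential lemma yield $\sum_t \norm{X_t}_{G_{t-1}^{-1}} = O(\sqrt{nd\log(1+n/d)})$, so this piece contributes $O(d\sqrt{n}\log n)$ after multiplying by $\beta_{n}$.

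The main obstacle, and the point where the modified bonus earns its keep, is bounding $\sum_{t=1}^n \sum_{s<t} |\ip{X_t, G_{t-1}^{-1} X_s}|$. For this I would apply Cauchy--Schwarz twice. Fixing $t$, the inner sum is at most
\begin{align*}
\sqrt{t-1}\, \sqrt{X_t^\top G_{t-1}^{-1}\Big(\sum_{s<t} X_s X_s^\top\Big) G_{t-1}^{-1} X_t} \leq \sqrt{t-1}\, \norm{X_t}_{G_{t-1}^{-1}}\,,
\end{align*}
where the inequality uses $\sum_{s<t} X_s X_s^\top \preceq G_{t-1}$. Summing over $t$ and applying Cauchy--Schwarz together with the elliptical potential lemma once more gives the double sum is $O(n\sqrt{d\log n})$. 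Multiplied by $2\epsilon$ this yields the $n\epsilon\sqrt{d\log n}$ term, completing the bound.
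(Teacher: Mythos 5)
Your proposal is correct and follows essentially the same route as the paper's proof sketch: the extra bonus absorbs the misspecification bias so the index is optimistic, the standard decomposition reduces everything to $\beta_n\sum_t\norm{X_t}_{G_{t-1}^{-1}}$ plus the double sum $\epsilon\sum_t\sum_{s<t}|X_t^\top G_{t-1}^{-1}X_s|$, and the latter is controlled by Cauchy--Schwarz together with $\sum_{s<t}X_sX_s^\top\preceq G_{t-1}$ and the elliptical potential lemma. Your two-stage application of Cauchy--Schwarz (first over $s$ for fixed $t$, then over $t$) is only cosmetically different from the paper's single application over all pairs $(s,t)$, and you additionally fill in the optimism argument that the paper leaves implicit.
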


\begin{proof}[Proof sketch]
The main point is that the additional bonus term ensures optimism.
Then, the standard regret calculation shows that 
\begin{align*}
R_n
&= O\left(d \sqrt{n} \log(n) + \epsilon \E\left[\sum_{t=1}^n \sum_{s=1}^{t-1} |X_t^\top G_{t-1}^{-1} X_s|\right]\right)\,.
\end{align*}
The latter term is bounded by
\begin{align*}
\sum_{t=1}^n \sum_{s=1}^{t-1} |X_t^\top G_{t-1}^{-1} X_s|
&\leq n \sqrt{\sum_{t=1}^n \sum_{s=1}^{t-1} (X_t^\top G_{t-1}^{-1} X_s)^2} \\
&\leq n \sqrt{\sum_{t=1}^n \norm{X_t}^2_{G_{t-1}^{-1}}} \\
&= O\left(n \sqrt{d \log(n)}\right)\,.
\end{align*}
Hence the regret of this algorithm satisfies
\begin{align*}
R_n &= O\left(d \sqrt{n} \log(n) + \epsilon n \sqrt{d \log(n)}\right)\,. \qedhere
\end{align*}
\end{proof}

\begin{remark}
As far as we know, there is no algorithm obtaining a similar bound when $\epsilon$ is unknown.
\end{remark}

\paragraph{Failure of unmodified algorithm}
That the algorithm defined by \cref{eq:oful} is not good for contextual bandits follows from the following example. 
Let $\eta_t = 0$ for all rounds $t$ and 
\begin{align*}
\Phi_{\textrm{odd}} =
\begin{pmatrix}
\epsilon & 0
\end{pmatrix}
\quad
\Phi_{\textrm{even}}
=
\begin{pmatrix}
0 & \epsilon
\end{pmatrix}
\quad
\Phi_{\textrm{large}}
=
\begin{pmatrix}
2 & 1 \\
0 & 0 
\end{pmatrix}\,.
\end{align*}
Now suppose for odd rounds $t \leq n/2$ the feature matrix is $\Phi_t = \Phi_{\textrm{odd}}$ in odd rounds and $\Phi_t = \Phi_{\textrm{even}}$ in even rounds.
For rounds $t > n/2$ the feature matrix is $\Phi_{\textrm{large}}$.
Then let $\theta = (1/2, -1/2)$ and $\Delta((\epsilon, 0)) = -\epsilon$ and $\Delta((0,\epsilon)) = \epsilon$.
Hence for $t = n/2$,
\begin{align*}
G_t &= 
\begin{pmatrix}
1 + n\epsilon^2 / 4 & 0 \\
0 & 1 + n\epsilon^2 / 4
\end{pmatrix} \\
\hat \theta_t &= \left(-\frac{n\epsilon^2/8}{1+n\epsilon^2/4}, \frac{n\epsilon^2/8}{1 + n\epsilon^2/4}\right)\,.
\end{align*}
Therefore $\norm{(2,1)}_{G_t^{-1}}^2 \leq 20 / (n \epsilon^2)$ and
\begin{align*}
\ip{\hat \theta_t, (2, 1)} = -\frac{n \epsilon^2 / 8}{1 + n\epsilon^2 / 4} \leq \frac{4}{n \epsilon^2} - 1\,.
\end{align*}
Hence
\begin{align*}
\ip{\hat \theta_t, (2, 1)} + \norm{(2,1)}_{G_t^{-1}} \beta_t = -1 + O\left(\sqrt{\frac{d \log(n)}{n \epsilon^2}}\right)
\end{align*}
and this for every suitably large $n$ the algorithm will choose $(0,0)$ for all rounds $t \geq n/2$ and suffer regret at least $n/2$.
Thus, if $R_n(\epsilon)$ is the regret on the above problem,
\begin{align*}
\sup_{n,\epsilon>0} \frac{R_n(\epsilon)}{\epsilon n\sqrt{\log(n)}}=\infty\,, 
\end{align*}
while for the modified algorithm,
\begin{align*}
\sup_{n,\epsilon>0} \frac{R_n(\epsilon)}{\epsilon n\sqrt{\log(n)}}<+\infty \,.
\end{align*}

\section{Lower bounds for linear bandits}\label{sec:bandit-lower}

The upper bound in Section~\ref{sec:bandits} cannot be improved in the most interesting regimes, as the following theorem shows: 

\begin{theorem}
There exists a feature matrix $\Phi \in \R^{k \times d}$ such that for any algorithm there is a mean reward vector $\mu \in \cH^\epsilon_\Phi$
for which
\begin{align*}
R_n \geq \epsilon \min(n, (k-1)/2) \sqrt{\frac{d-1}{8 \log(k)}}\,.
\end{align*}
\end{theorem}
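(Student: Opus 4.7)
My strategy will be to lift the hard instance from \cref{lem:obv} into the linear setting using the near-orthogonal features guaranteed by \cref{lem:jl}, then close out with Yao's principle applied to the noiseless sub-problem.

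First, I will invoke \cref{lem:jl} with the choice $\epsilon_0 := \sqrt{8\log(k)/(d-1)}$, which satisfies the hypothesis $d \ge \lceil 8\log(k)/\epsilon_0^2 \rceil$ and yields a feature matrix $\Phi \in \R^{k \times d}$ whose rows $a_1, \dots, a_k$ are unit vectors with $|a_i^\top a_j| \le \epsilon_0$ for $i \neq j$. For each $i \in [k]$, I take $\mu^{(i)}$ to be the vector with $\mu^{(i)}_i = c := \epsilon/\epsilon_0 = \epsilon\sqrt{(d-1)/(8\log k)}$ and zero elsewhere. Setting $\theta^{(i)} := c\, a_i$, the identity $(\Phi \theta^{(i)})_j = c\, a_j^\top a_i$ together with near-orthogonality gives $\|\mu^{(i)} - \Phi\theta^{(i)}\|_\infty \le c\,\epsilon_0 = \epsilon$, so $\mu^{(i)} \in \cH_\Phi^\epsilon$ with per-step suboptimality gap exactly $c$.

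Next, I specialise to the noiseless sub-problem $\eta_t \equiv 0$, which is $0$-subgaussian and hence $1$-subgaussian, so any lower bound proved on this subclass transfers to the full class. Under $\mu^{(i)}$ the learner observes reward $c$ on $a_i$ and $0$ on every other arm, reducing the regret problem to the marked-arm search problem treated in the proof of \cref{lem:obv}. By Yao's minimax principle it suffices to lower bound the expected regret of any deterministic nondominated algorithm under the uniform prior on $i \in [k]$. Such an algorithm queries a fixed permutation of $[k]$ until the first nonzero reward and then exploits forever; letting $T_i$ denote its hitting time of $a_i$, the regret on $\mu^{(i)}$ equals $c\,\min(n, T_i - 1)$, and $(T_i)_{i \in [k]}$ is itself a permutation of $[k]$.

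To finish, the expected regret under the uniform prior equals $(c/k) \sum_{j=1}^k \min(n, j-1)$, which evaluates to $c(k-1)/2$ when $n \ge k-1$ and to $c\,n(2k-n-1)/(2k)$ when $n < k-1$. An elementary case split then bounds this below by $c \min(n, (k-1)/2)$ up to a universal constant absorbed into the statement, yielding the claim. The construction and Yao reduction are the main conceptual ingredients and are both essentially immediate from \cref{lem:jl} and the analysis of \cref{lem:obv}; the only mild technical obstacle is the final averaging inequality, particularly in the regime $n \in [(k-1)/2, k-1)$, where some care is needed to recover the stated constant in front of $\min(n, (k-1)/2)$.
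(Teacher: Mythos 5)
Your construction and argument coincide with the paper's proof: the same near-orthogonal feature matrix from the Johnson--Lindenstrauss lemma, the same single-spike reward vectors with gap $\epsilon\sqrt{(d-1)/(8\log k)}$ (correctly including the factor $\epsilon$ that the paper's definition of $\delta$ appears to omit by typo), and the same averaging over the location of the spike --- the paper calls this the ``randomisation hammer'' and leaves the noiseless specialisation implicit where you invoke Yao's principle explicitly. The constant-factor slack you flag in the regime $n < k-1$ is real, but it is equally present in the paper's own claim that $\E[\tau] \ge \min(n,(k-1)/2)$, so it is not a gap relative to the published argument.
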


\begin{proof}
By the negative result, we may choose $\Phi \in \R^{k \times d}$ such that 
\begin{align*}
\ip{a, a} &= 1 \text{ for all } a \in \rows(\Phi) \\
\ip{a, b} &\leq \sqrt{\frac{8 \log(k)}{d-1}} \text{ for all } a, b \in \rows(\Phi) \text{ with } a \neq b\,.
\end{align*}
Next, let $a^* \in \rows(\Phi)$ and
\begin{align*}
\theta = \delta a^* \qquad \text{with } \delta =  \sqrt{\frac{d-1}{8 \log(k)}} \,,
\end{align*}
which is chosen so that $\mu \in \cH^\epsilon_\Phi$, where
\begin{align*}
\mu_a = 
\begin{cases}
\delta & \text{if } a = a^* \\
0 & \text{otherwise}\,.
\end{cases}
\end{align*}
Let $\tau = \max\{t \leq n : A_s \neq a^* \forall s \leq t\}$.
Then $\E[R_n] \geq \delta \E[\tau]$.
Since the law of the rewards is independent of $a^*$ for $t \leq \tau$, it follows from the randomisation hammer that
$\E[\tau] \geq \min(n, (k-1)/2)$ and the result follows.
\end{proof}


\section{Computation complexity}

We briefly discuss the computation complexity of our algorithms here.
Both the bandit and RL algorithms rely on computing a near-optimal design, which is addressed first.

\paragraph{Computing a near-optimal design}
The standard method for computing a near-optimal design is Frank--Wolfe, which in this setting is often attributed to \citet{Fed72}. 
With this algorithm and an appropriate initialisation constant factor approximation of the optimal design can be computed in
$O(k d^2 \log \log(d))$ computations. For more details we recommend chapter 3 of the book by \citet{Tod16}, which also describes a number
of improvements, heuristics and practical guidance.

\paragraph{Bandit algorithm computations}
\cref{alg:elim} has at most $O(\log(n))$ episodes. In each episode it needs to (a) compute a near-optimal design and (b) collect data and find the least-squares estimator and (c) perform
action elimination.
The computation is dominated by finding the near optimal design and computing the covariance matrix $G$, which leads to a total computation of $O(k d^2 \log \log(d) \log(n) + n d^2)$.

\paragraph{RL computations}
The algorithm described in \cref{sec:rl} operates in episodes over $k$ episodes.
In each episode it computes an approximate design and performs $m$ roll-outs of length $n$ from each action in the core set.
Assuming sampling from the generative model is $O(1)$, the total computation, ignoring logarithmic factors, is
\begin{align*}
\tilde O\left(\frac{dA}{\epsilon^2(1 - \gamma)^4} + \frac{SA d^2}{1 - \gamma}\right)\,.
\end{align*}
Dishearteningly, the size of the state space appears in the computation of the optimal design. 
Hence, while the sample complexity of our algorithm is independent of the state space, the computation complexity is not.

\fi

\end{document}